\documentclass[11pt]{article}

\usepackage[preprint]{acl}

\usepackage{times}
\usepackage{latexsym}
\usepackage[T1]{fontenc}
\usepackage[utf8]{inputenc}
\usepackage{microtype}
\usepackage{graphicx}
\usepackage{booktabs}
\usepackage{amsmath}
\usepackage{amssymb}
\usepackage{amsthm}
\newtheorem{theorem}{Theorem}

\newtheorem{corollary}{Corollary}
\theoremstyle{definition}

\theoremstyle{remark}

\usepackage{multirow}
\usepackage{array}
\usepackage{makecell}
\usepackage{xcolor}
\usepackage{colortbl}
\usepackage{arydshln}
\definecolor{ourrowbg}{HTML}{D8EEF2}
\definecolor{lineBase}{HTML}{888888}
\definecolor{lineTG}{HTML}{C0504D}
\definecolor{linePG}{HTML}{4F81BD}
\definecolor{lineOurs}{HTML}{1F8C9C}
\definecolor{ntTerm}{HTML}{777777}
\definecolor{ntAgent}{HTML}{2F6FB3}
\definecolor{ntUser}{HTML}{A87900}
\definecolor{ntTool}{HTML}{2F855A}
\definecolor{ntAuth}{HTML}{C04B4B}
\definecolor{ntDecision}{HTML}{7C4DAD}
\definecolor{ntSubflow}{HTML}{B7791F}
\usepackage{inconsolata}
\usepackage{pifont}
\usepackage{enumitem}
\usepackage{tikz}
\usetikzlibrary{arrows.meta,positioning,shapes.geometric,calc,fit,backgrounds}
\usepackage{pgfplots}
\pgfplotsset{compat=1.18}
\definecolor{lineBase}{HTML}{9098A1}
\definecolor{lineTG}{HTML}{E8912A}
\definecolor{linePG}{HTML}{5A6FC0}
\definecolor{lineOurs}{HTML}{12A19A}
\pgfplotsset{
  passk base/.style={
    axis lines=left,
    axis line style={draw=black, line width=0.6pt},
    tick align=outside, major tick length=2.5pt,
    xtick style={draw=black}, ytick style={draw=black},
    ymajorgrids=true, xmajorgrids=false,
    grid style={gray!20, line width=0.4pt, densely dashed},
    xticklabel style={font=\scriptsize}, yticklabel style={font=\scriptsize},
    title style={font=\small\bfseries, yshift=-0.3mm},
    label style={font=\scriptsize},
    ylabel near ticks, ylabel style={yshift=-1.5mm},
    clip=false,
    every axis plot/.append style={line width=1.1pt, mark size=1.7pt},
  },
  p direct/.style={lineBase, densely dashed, mark=*,
    mark options={solid, fill=white, draw=lineBase, line width=0.7pt}},
  p tg/.style={lineTG, mark=square*,
    mark options={fill=white, draw=lineTG, line width=0.7pt}},
  p pg/.style={linePG, mark=triangle*, mark size=2.1pt,
    mark options={fill=white, draw=linePG, line width=0.7pt}},
  p ours/.style={lineOurs, line width=1.7pt, mark=*,
    mark options={fill=lineOurs, draw=lineOurs}},
  passk legend/.style={legend style={font=\scriptsize, draw=none, fill=none,
    column sep=3mm, inner sep=1pt}, legend cell align=left},
}
\usepackage[ruled,vlined]{algorithm2e}
\usepackage{listings}
\usepackage[most]{tcolorbox}
\usepackage{dblfloatfix}

\newcommand{\taub}{$\tau$-bench}
\newcommand{\tautwo}{$\tau^2$-bench}
\newcommand{\pg}{\textsc{PolicyGuide}}
\newcommand{\pgs}{\textsc{PolicyGuide-Self}}
\newcommand{\pgd}{PolicyGuard}
\newcommand{\pgr}{\textsc{PolicyGuide-Raw}}
\newcommand{\tg}{ToolGuard}
\newcommand{\direct}{ReAct}

\newcommand{\passone}{$\textsc{Pass}^{1}$}
\newcommand{\passfour}{$\textsc{Pass}^{4}$}
\newcommand{\passk}{$\textsc{Pass}^{k}$}
\newcommand{\gptfive}{GPT~5.4}
\newcommand{\gptfourone}{GPT~4.1}
\newcommand{\claudesonnet}{Claude Sonnet~4.6}
\newcommand{\geminipro}{Gemini~2.5 Pro}
\newcommand{\modelicon}[2][0.9em]{\raisebox{-0.18\height}{\includegraphics[height=#1]{#2}}}
\newcommand{\oaiicon}[1][0.9em]{\modelicon[#1]{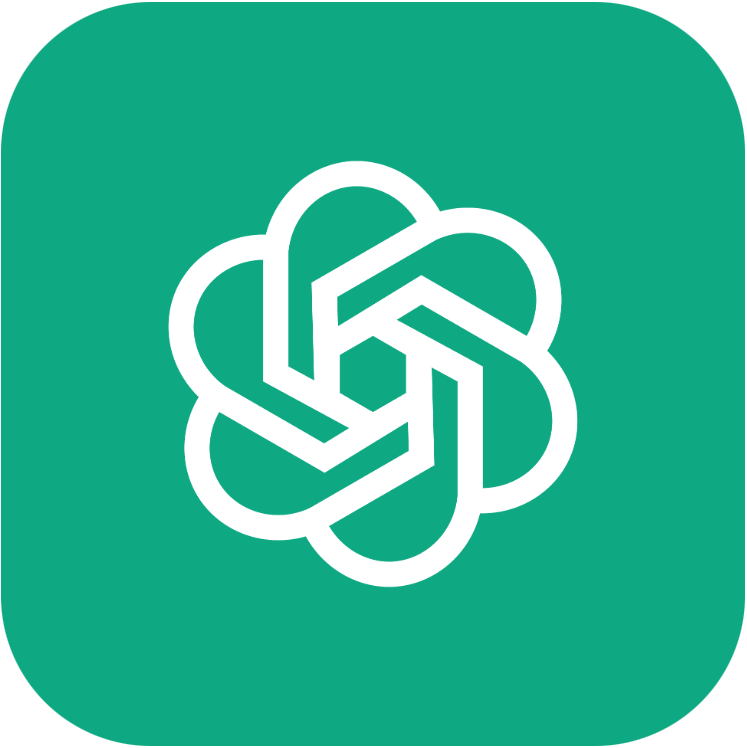}}
\newcommand{\claudeicon}[1][0.9em]{\modelicon[#1]{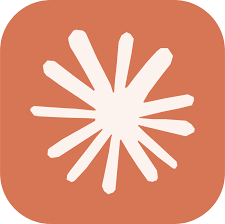}}
\newcommand{\geminiicon}[1][0.9em]{\modelicon[#1]{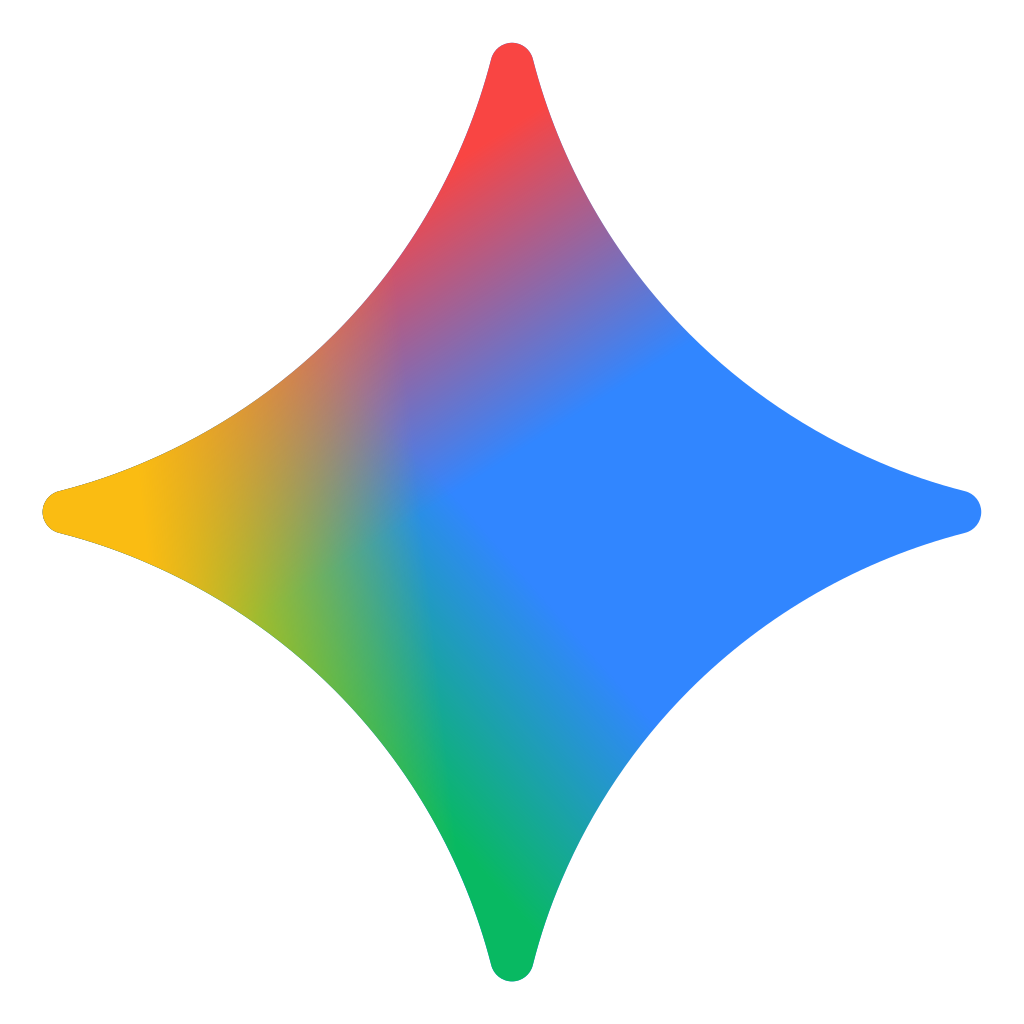}}
\newcommand{\tblgptfive}{\gptfive}
\newcommand{\tblclaudesonnet}{\claudesonnet}
\newcommand{\tblgeminipro}{\geminipro}
\newcommand{\airlineicon}[1][0.9em]{\modelicon[#1]{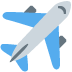}}
\newcommand{\retailicon}[1][0.9em]{\modelicon[#1]{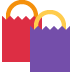}}
\newcommand{\telecomicon}[1][0.9em]{\modelicon[#1]{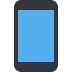}}
\newcommand{\airlinedomain}{Airline}
\newcommand{\retaildomain}{Retail}
\newcommand{\telecomdomain}{Telecom}
\newcommand{\tblairlinedomain}{\airlineicon[0.8em]\,\airlinedomain}
\newcommand{\tblretaildomain}{\retailicon[0.8em]\,\retaildomain}
\newcommand{\tbltelecomdomain}{\telecomicon[0.8em]\,\telecomdomain}
\newcommand{\nodetok}[2]{\tcbox[
  enhanced, nobeforeafter, on line, sharp corners,
  boxsep=0pt, left=2pt, right=2pt, top=0.5pt, bottom=0.5pt,
  boxrule=0pt, colback=#1!14, coltext=#1!45!black,
  tcbox raise base
]{\strut\ttfamily #2}}
\newcommand{\ntentry}{\nodetok{ntTerm}{entry/exit}}
\newcommand{\ntagent}{\nodetok{ntAgent}{agent\_action}}
\newcommand{\ntuser}{\nodetok{ntUser}{user\_input}}
\newcommand{\nttool}{\nodetok{ntTool}{tool\_call}}
\newcommand{\ntauth}{\nodetok{ntAuth}{tool\_authorization}}
\newcommand{\ntdecision}{\nodetok{ntDecision}{decision}}
\newcommand{\ntsubflow}{\nodetok{ntSubflow}{subflow}}
\newcommand{\circled}[1]{\textcircled{\raisebox{-0.4pt}{\scriptsize #1}}}

\title{\pg{}: From Guarding One Action to Guiding the Whole Workflow for Policy-Compliant LLM Agents}

\author{
  Seongjae Kang$^1$ \quad Taehyung Yu$^1$ \quad Sung Ju Hwang$^{1,2}$ \\
  $^1$KAIST \quad $^2$DeepAuto.ai \\
  \texttt{\{tjdwo2744, taehyung.yu, sjhwang\}@kaist.ac.kr}
}

\begin{document}
\maketitle

\begin{abstract}
Customer-service LLM agents must follow organizational policy when acting on a user's behalf.
Compliance failures arise from either forbidden actions, such as granting an ineligible change, or omitted procedural requirements, such as identification or confirmation.
Runtime safeguards can intervene on risky actions, but action-local checks do not guide an agent through a multi-step procedure.
Workflow-following systems support prescribed process execution, but primarily target workflow completion rather than safeguarding agent behavior.
\pg{} instead compiles each domain policy into a workflow graph and invokes a proactive verifier at user-turn boundaries.
From persisted graph state, the verifier reconciles open requests and returns step-specific remediation along a policy-compliant path.
Across the \tautwo{} airline, retail, and telecom domains with a \gptfive{} agent and verifier, \pg{} raises mean \passfour{} from $0.42$ to $0.62$, with the largest gain on telecom ($0.19$ to $0.61$), the most workflow-structured domain.
The same workflows transfer to \claudesonnet{} and \geminipro{} agents. Complementary evaluations find the lowest observed attack-success rate under adversarial users and the strongest procedural compliance in an author-designed workflow-level validation.
\end{abstract}

\section{Introduction}
\label{sec:intro}

\begin{figure}[t]
\centering
\definecolor{qblue}{HTML}{1976D2}
\definecolor{qgreen}{HTML}{2EAD4F}
\definecolor{qorange}{HTML}{F59E0B}
\definecolor{qpurple}{HTML}{B23AC7}
\definecolor{qpanel}{HTML}{F6F8FA}
\resizebox{\columnwidth}{!}{%
\begin{tikzpicture}[font=\sffamily,
    cell/.style={rounded corners=2pt, line width=0.65pt},
    ctitle/.style={font=\footnotesize\bfseries, text=black!88, align=center, inner sep=1pt},
    csys/.style={font=\tiny, text=black!78, align=center, inner sep=1pt},
    cdesc/.style={font=\scriptsize, text=black!88, align=center, inner sep=1pt},
    axis/.style={-{Stealth[length=1.8mm]}, line width=0.7pt, draw=black!65},
]
  \def\xa{0.55}\def\xm{4.30}\def\xb{4.45}\def\xc{8.20}
  \def\ya{0.75}\def\ym{3.05}\def\yb{3.20}\def\yc{5.50}
  \draw[cell, draw=black!32, fill=qpanel] (\xa,\ya) rectangle (\xm,\ym);
  \node[ctitle] at ({(\xa+\xm)/2},{(\ya+\ym)/2+0.62}) {Unguided agent};
  \node[csys]   at ({(\xa+\xm)/2},{(\ya+\ym)/2+0.12}) {Base agent};
  \node[cdesc]  at ({(\xa+\xm)/2},{(\ya+\ym)/2-0.52}) {Neither guides nor\\externally safeguards};
  \draw[cell, draw=qpurple!65!black, fill=qpurple!5] (\xb,\ya) rectangle (\xc,\ym);
  \node[ctitle] at ({(\xb+\xc)/2},{(\ya+\ym)/2+0.62}) {Action verifiers};
  \node[csys]   at ({(\xb+\xc)/2},{(\ya+\ym)/2+0.12}) {PolicyGuard, ToolGuard};
  \node[cdesc]  at ({(\xb+\xc)/2},{(\ya+\ym)/2-0.52}) {Externally check and gate\\the proposed action};
  \draw[cell, draw=qorange!75!black, fill=qorange!7] (\xa,\yb) rectangle (\xm,\yc);
  \node[ctitle] at ({(\xa+\xm)/2},{(\yb+\yc)/2+0.62}) {Workflow / SOP agents};
  \node[csys]   at ({(\xa+\xm)/2},{(\yb+\yc)/2+0.12}) {SOP-Agent, StateFlow, FlowAgent};
  \node[cdesc]  at ({(\xa+\xm)/2},{(\yb+\yc)/2-0.52}) {Follow a prescribed\\procedure};
  \draw[cell, draw=qgreen!75!black, fill=qgreen!10] (\xb,\yb) rectangle (\xc,\yc);
  \node[ctitle, text=qgreen!45!black]
    at ({(\xb+\xc)/2},{(\yb+\yc)/2+0.62}) {\pg{} (ours)};
  \node[csys] at ({(\xb+\xc)/2},{(\yb+\yc)/2+0.12}) {External workflow safeguard};
  \node[cdesc] at ({(\xb+\xc)/2},{(\yb+\yc)/2-0.52}) {Guides required steps\\and gates actions};
  \draw[axis] (\xa,0.42) -- (\xc,0.42);
  \node[anchor=north, font=\scriptsize, inner sep=2pt] at ({(\xa+\xc)/2},0.30)
    {\textbf{External safeguarding}\;---\;Monitors agent behavior};
  \draw[axis] (0.32,\ya) -- (0.32,\yc);
  \node[anchor=south, rotate=90, font=\scriptsize, inner sep=2pt] at (0.20,{(\ya+\yc)/2})
    {\textbf{Workflow enforcement}\;---\;Guides the required procedure};
\end{tikzpicture}%
}
\caption{Workflow systems execute prescribed procedures, while external
safeguards monitor agent behavior. \pg{} combines both roles.}
\label{fig:quadrant}
\vspace{-0.2in}
\end{figure}

\begin{figure*}[t]
\centering
\definecolor{ourblue}{HTML}{1976D2}
\definecolor{ourgreen}{HTML}{2EAD4F}
\definecolor{ourorange}{HTML}{F59E0B}
\definecolor{ourpurple}{HTML}{B23AC7}
\definecolor{ourred}{HTML}{D94848}
\definecolor{chipbg}{HTML}{F6F8FA}
\definecolor{chipbd}{HTML}{D4DADF}
\definecolor{dimfg}{HTML}{8B929A}
\resizebox{0.99\textwidth}{!}{%
\begin{tikzpicture}[
    font=\sffamily,
    bandhdr/.style={font=\small\bfseries, anchor=west},
    chip/.style={draw=chipbd, fill=chipbg, rounded corners=2pt, line width=0.4pt,
                 inner sep=3pt, font=\scriptsize, align=center, text width=20mm, minimum height=9mm},
    chipn/.style={chip, text width=13.5mm, inner sep=2.5pt},
    chipndim/.style={chipn, draw=ourred!45, fill=ourred!4, text=black!88},
    chipnwrite/.style={chipn, text width=15mm, inner sep=2pt, draw=ourred!75, fill=ourred!6,
                      line width=0.6pt, font=\scriptsize\ttfamily},
    chipwriteok/.style={chip, draw=ourgreen!80!black, fill=ourgreen!8, line width=0.6pt, font=\scriptsize\ttfamily},
    turnarrow/.style={->, >={Stealth[length=1.2mm,width=1.0mm]}, line width=0.65pt, draw=black!75},
    rail/.style={line width=0.8pt, draw=black!35},
    norail/.style={densely dashed, line width=0.6pt, draw=dimfg!70},
    asleep/.style={draw=dimfg, fill=white, circle, inner sep=1.6pt, line width=0.6pt},
    guardevt/.style={draw=ourpurple!80!black, fill=ourpurple!18, circle, inner sep=2.2pt, line width=0.8pt},
    guideevt/.style={draw=ourgreen!80!black, fill=ourgreen!22, circle, inner sep=2.2pt, line width=0.8pt},
    evtline/.style={densely dotted, line width=0.5pt},
    dirlab/.style={font=\scriptsize, text=black!88, align=center},
    guardlab/.style={font=\scriptsize, text=black!88, align=center},
    asleeplab/.style={font=\scriptsize, text=black!88, align=center},
    blockbadge/.style={draw=ourred, fill=ourred!10, rounded corners=3pt, inner xsep=4pt,
                       inner ysep=2.5pt, font=\scriptsize\bfseries, text=ourred!80!black, align=center},
    okbadge/.style={draw=ourgreen!80!black, fill=ourgreen!12, rounded corners=2pt, inner xsep=4pt,
                    inner ysep=2.5pt, font=\scriptsize\bfseries, text=ourgreen!40!black, align=center},
]
\node[bandhdr] at (0,0.4) {(a) Action guard:\ \ one \textsc{pass/block} at the final call};
\node[chipn]     (a1) at (0.9,-1.05) {\textbf{User:} change my flight to HAT228};
\node[chipndim]  (a2) at (2.75,-1.05) {\textbf{Agent:} skips identify, eligibility, confirmation};
\node[chipnwrite](a3) at (4.6,-1.05) {update\_\\res.\_flights\\(...)};
\node[chipn]     (a4) at (6.45,-1.05) {\textbf{Agent:} re-derives procedure from the error};
\node[blockbadge, text width=12mm] (afail) at (8.25,-1.05) {loops /\\wrong\\recovery};
\draw[turnarrow] (a1) -- (a2);
\draw[turnarrow] (a2) -- (a3);
\draw[turnarrow] (a3) -- (a4);
\draw[turnarrow] (a4) -- (afail);
\draw[rail] (0.1,-2.25) -- (9.1,-2.25);
\node[font=\scriptsize\bfseries, text=black!80, anchor=west] at (0.1,-1.98) {guard};
\node[asleep]   (sa1) at (2.75,-2.25) {};
\node[guardevt] (ga1) at (4.6,-2.25) {};
\draw[evtline, draw=dimfg]  (a2.south) -- (sa1);
\draw[evtline, draw=ourred] (a3.south) -- (ga1);
\node[asleeplab, anchor=north, text width=26mm] at (2.35,-2.45)
  {Inactive: skipped steps are not evaluated};
\node[guardlab, anchor=north, text width=36mm] at (6.35,-2.45)
  {\textbf{BLOCK} at the final call: unmet prerequisites are reported only after the procedure has failed};
\draw[line width=0.5pt, draw=chipbd] (9.55,0.55) -- (9.55,-3.65);
\node[bandhdr] at (10.0,0.4) {(b) Workflow / SOP agent:\ \ workflow execution is part of the agent};
\node[chipn]     (w1) at (10.9,-1.05) {\textbf{User:} change my flight to HAT228};
\node[chipn]     (w2) at (12.75,-1.05) {\textbf{Agent:} follows the prescribed workflow};
\node[chipn]     (w3) at (14.6,-1.05) {\textbf{Agent:} completes the required checks};
\node[chipnwrite](w4) at (16.45,-1.05) {update\_\\res.\_flights\\(...)};
\node[okbadge, text width=12mm] (wfail) at (18.25,-1.05) {procedure\\completed};
\draw[turnarrow] (w1) -- (w2);
\draw[turnarrow] (w2) -- (w3);
\draw[turnarrow] (w3) -- (w4);
\draw[turnarrow] (w4) -- (wfail);
\draw[norail] (10.0,-2.25) -- (19.1,-2.25);
\node[font=\scriptsize\bfseries, text=black!80, anchor=west] at (10.0,-1.98) {(agent-integrated)};
\node[asleeplab, anchor=north, text width=34mm] at (11.95,-2.45)
  {Workflow control is implemented within the agent runtime};
\node[guardlab, anchor=north, text width=38mm] at (16.45,-2.45)
  {Its checks support faithful workflow execution within that runtime};
\node[bandhdr, text=ourgreen!40!black] at (0,-3.95) {(c) \pg{} (ours):\ \ guide generation at user-turn boundaries, walking the policy graph};
\node[chip]      (b1) at (1.15,-5.10) {\textbf{User:} change my flight to HAT228};
\node[chip]      (b2) at (4.0,-5.10) {\textbf{Agent:} asks for the user id};
\node[chip]      (b3) at (6.85,-5.10) {\textbf{Agent:} \texttt{get\_user\_} \texttt{details} \ding{51}};
\node[chip]      (b4) at (9.7,-5.10) {\textbf{Agent:} checks eligibility, presents summary};
\node[chip]      (b5) at (12.55,-5.10) {\textbf{User:} confirms ``yes''};
\node[chipwriteok](b6) at (15.4,-5.10) {update\_res.\_\\flights(...) \ding{51}};
\node[okbadge, text width=16mm] (bok) at (18.05,-5.10) {task\\completed\\in policy};
\draw[turnarrow] (b1) -- (b2);
\draw[turnarrow] (b2) -- (b3);
\draw[turnarrow] (b3) -- (b4);
\draw[turnarrow] (b4) -- (b5);
\draw[turnarrow] (b5) -- (b6);
\draw[turnarrow] (b6) -- (bok);
\draw[rail, draw=ourgreen!55] (0.1,-6.15) -- (19.1,-6.15);
\node[font=\scriptsize\bfseries, text=ourgreen!35!black, anchor=west] at (0.1,-5.88) {guide (user turns; workflow safeguards)};
\node[guideevt] (ge1) at (2.6,-6.15) {};
\node[guideevt] (ge2) at (5.45,-6.15) {};
\node[guideevt] (ge3) at (14.0,-6.15) {};
\draw[evtline, draw=ourgreen!80!black] (ge1) -- (b2.south west);
\draw[evtline, draw=ourgreen!80!black] (ge2) -- (b3.south west);
\draw[evtline, draw=ourgreen!80!black] (ge3) -- (b6.south west);
\node[dirlab, anchor=north, text width=25mm] at (2.6,-6.35)
  {\textbf{\ding{172}} User turn: identify the user first};
\node[dirlab, anchor=north, text width=44mm] at (6.45,-6.35)
  {\textbf{\ding{173}} User turn: load the record, verify eligibility, present the change, and request confirmation};
\node[dirlab, anchor=north, text width=27mm] at (14.0,-6.35)
  {\textbf{\ding{174}} User confirms: prerequisites met; \textbf{mutation recommended} with grounded arguments};
\end{tikzpicture}%
}
\caption{One task under three enforcement regimes. \textbf{(a)} An action guard
checks only the final mutating call, so skipped procedure is discovered late and
returned as a block. \textbf{(b)} A workflow/SOP agent drives the procedure, but
its checks are designed for faithful workflow execution rather than as a
safeguard against policy-violating behavior by a general-purpose agent. \textbf{(c)} \pg{} runs as an external, advisory guide:
at user-turn boundaries it tracks graph position across turns and stops at the first
unsatisfied node. If the agent attempts a mutating tool call before completing the
workflow, the runtime returns remediation for the unmet step. It recommends the
mutation only after the required workflow steps are grounded.}
\label{fig:concept}
\vspace{-0.1in}
\end{figure*}

LLM agents are beginning to support customer-service work, including booking flights, modifying orders, and changing account plans through tools on user accounts.
These systems typically pair a general-purpose reasoning-and-acting loop~\citep{yao2023react} with a frontier model such as \mbox{\oaiicon\,\tblgptfive{}}, \mbox{\claudeicon\,\tblclaudesonnet{}}, or \mbox{\geminiicon\,\tblgeminipro{}}~\citep{openai2026gpt54,anthropic2026sonnet46,comanici2025gemini25}.
Because these models are large and closed-weight, domain-specific fine-tuning is often unavailable or impractical; runtime safeguards offer an integration point that does not require retraining.
\taub{}~\citep{yao2024taubench} and \tautwo{}~\citep{barres2025tau2} evaluate this setting against natural-language policies in \mbox{\airlineicon\,\airlinedomain{}}, \mbox{\retailicon\,\retaildomain{}}, and \mbox{\telecomicon\,\telecomdomain{}}.

Policy compliance depends on both the selected action and the procedure used to reach it.
An agent may grant an ineligible change, or it may skip or misorder identification, eligibility checks, and confirmation.
Such procedural failures can produce a forbidden outcome or leave an otherwise permissible action unsupported.
Our source-policy analysis (Appendices~\ref{app:policy} and~\ref{app:policy:axis2}) finds that procedural requirements are pervasive ($67.4\%$ in airline, ${\sim}100\%$ in retail, and $98.0\%$ in telecom), whereas ordered workflow requirements concentrate in telecom ($54.0\%$, versus $4.7\%$ in airline and $3.6\%$ in retail).
Flat prerequisites can often be checked when the agent proposes a guarded
action, such as a mutating tool call. Ordered requirements also constrain
earlier dialogue and tool-use actions.
For example, telecom troubleshooting follows diagnose--instruct--verify sequences that may contain no agent-side mutation for an action guard to intercept.

These two failure modes motivate complementary capabilities (Figure~\ref{fig:quadrant}).
\emph{Safeguarding} monitors agent behavior and intervenes on risky actions~\citep{zwerdling2025toolguard,chen2025shieldagent,xiang2025guardagent}, whereas \emph{workflow enforcement} steers execution through required steps~\citep{ye2025sopagent,wu2024stateflow,shi2025flowagent}.
The two literatures thus target different primary objectives: safe agent behavior and faithful workflow completion.
\pgd{}~\citep{kang2026policyguard} provides the closest connection by incorporating procedural remediation into a mutating-call safeguard, but remains action-triggered and cannot cover earlier deviations outside its guarded action class.

We propose \pg{}, an external runtime guide for policy-compliant agents (Figure~\ref{fig:concept}).
\pg{} compiles each domain policy into a workflow graph.
At user-turn boundaries, a proactive verifier traverses the graph from its persisted position, reconciles all open requests, and returns focused remediation for the first unmet step.
Persisted state lets the verifier apply workflow safeguards throughout the interaction while coordinating multiple requests.
The result is an agent-agnostic external overlay that pairs the same procedural representation with different agents.
Across the \tautwo{} airline, retail, and telecom domains with a \gptfive{} agent and verifier, \pg{} raises mean \passfour{} across domains from $0.42$ (unguided) to $0.62$ (\S\ref{sec:results}), with the largest gain on telecom ($0.19$ to $0.61$), the domain whose policy is most workflow-structured.
We additionally evaluate diagnostic workflow variants and a matched FlowAgent
workflow-controller baseline on telecom
(\S\ref{sec:ablation}--\ref{sec:flowagent}).
The same workflows transfer to \claudesonnet{} and \geminipro{} agents
(\S\ref{sec:modelgen}). We further evaluate adversarial robustness with
CRAFT red-teaming and workflow compliance with an author-designed Telecom
trace audit (\S\ref{sec:robust} and~\S\ref{sec:trace-compliance}).

\paragraph{Contributions.}
\begin{itemize}[itemsep=0pt,topsep=2pt,leftmargin=*]
\item We characterize policy compliance as a joint safeguarding and
workflow-enforcement problem: agents must avoid impermissible actions while
completing required procedural steps, including those occurring before or
outside a guarded action class.
\item We introduce \pg{}, an external proactive verifier that compiles policies into workflow graphs, tracks multiple open requests across turns, and guides the agent through the required steps.
\item We demonstrate 20-point mean \passfour{} gains and cross-agent transfer.
Complementary analyses show robustness to CRAFT red-team attacks and
stronger workflow compliance.
\end{itemize}

\section{Background and Related Work}
\label{sec:related}

\pg{} connects runtime safeguards, which monitor agent behavior, with workflow-guided systems, which execute prescribed procedures.
It gives persisted workflow state a safeguarding role over agent behavior rather than making workflow control the agent architecture; separating verifier from actor additionally enables reuse across agent runtimes (Figure~\ref{fig:quadrant}).

\subsection{\tautwo{} and policy-adherent agents}
\tautwo{}~\citep{barres2025tau2}, building on \taub{}~\citep{yao2024taubench},
benchmarks policy-adherent LLM agents in customer-service domains with a
natural-language policy, read-only tools, and mutating tools. We use the airline,
retail, and telecom domains: each task is either policy-violation (the agent must
refuse) or mutation (the agent must act correctly), and success requires both the
final database state and the natural-language assertions to hold. Telecom adds
dual control, where some required actions are user-side tools the agent cannot
call, making workflow order especially visible. Nearby benchmarks target
complementary questions: CRMArena-Pro studies confidentiality compliance rather
than ordered procedures~\citep{huang2025crmarena}; IntellAgent generates
diagnostic tests from policy graphs~\citep{levi2025intellagent}; Near-Miss audits
failures post hoc~\citep{rabinovich2026nearmiss}; AgentRewardBench evaluates
trajectory judges~\citep{lu2025agentrewardbench}; and CRAFT supplies adversarial
users rather than a benign workflow-completion benchmark~\citep{nakash2025craft}.

\begin{figure*}[t]
\centering
\definecolor{gpgreen}{HTML}{2EAD4F}
\definecolor{gporange}{HTML}{F59E0B}
\definecolor{gpred}{HTML}{D94848}
\definecolor{gpink}{HTML}{20252B}
\definecolor{gpline}{HTML}{D4DADF}
\definecolor{gplight}{HTML}{F6F8FA}
\resizebox{0.99\textwidth}{!}{%
\begin{tikzpicture}[
    font=\sffamily,
    phase/.style={font=\small\bfseries, text=gpink, anchor=west},
    note/.style={font=\scriptsize, text=black, anchor=west},
    frame/.style={draw=gpline, fill=gplight, rounded corners=2pt, line width=0.55pt},
    offlineframe/.style={draw=gporange!55!black, fill=gporange!4, rounded corners=2pt,
                         line width=0.55pt},
    verifierframe/.style={draw=gpgreen!65!black, fill=gpgreen!3, rounded corners=2pt,
                          line width=0.70pt},
    stage/.style={draw=gpline, fill=white, rounded corners=2pt, line width=0.55pt,
                  minimum width=2.18cm, minimum height=1.05cm, inner sep=3pt,
                  font=\scriptsize, align=center},
    artifact/.style={stage, draw=gporange!75!black, fill=gporange!7, line width=0.8pt},
    paneltitle/.style={font=\small\bfseries, text=gpink, anchor=north west},
    card/.style={draw=gpline, fill=gplight, rounded corners=1.5pt, line width=0.45pt,
                 inner sep=3pt, font=\scriptsize, align=left},
    chatcard/.style={card, fill=white},
    guidecard/.style={chatcard, draw=gpgreen!70!black, fill=gpgreen!8, line width=0.65pt},
    latestcard/.style={chatcard, draw=black!55, fill=white, line width=0.65pt},
    statecard/.style={card, draw=black!45, fill=white, dashed},
    workflowcard/.style={card, draw=gporange!75!black, fill=gporange!7, line width=0.65pt},
    graphbox/.style={draw=gpline, fill=white, rounded corners=1.5pt, line width=0.45pt},
    vstep/.style={draw=gpline, fill=white, rounded corners=1.5pt, line width=0.5pt,
                  inner sep=3pt, font=\scriptsize, align=left},
    sat/.style={font=\scriptsize, text=gpgreen!75!black, anchor=west},
    stop/.style={font=\scriptsize\bfseries, text=gpred, anchor=west},
    output/.style={draw=gpgreen!70!black, fill=gpgreen!8, rounded corners=2pt, line width=0.75pt,
                   inner sep=3pt, font=\scriptsize, align=left},
    blocked/.style={draw=gpred!70, fill=gpred!3, rounded corners=2pt, line width=0.65pt,
                    inner sep=3pt, font=\scriptsize, align=left},
    node/.style={draw=black!55, fill=white, rounded corners=1pt, line width=0.5pt,
                 inner sep=1pt, font=\tiny, align=center, minimum height=4.2mm,
                 minimum width=0.95cm, text width=0.78cm},
    done/.style={node, draw=gpgreen!70!black},
    auth/.style={node, draw=gpred!70, fill=gpred!3},
    arrow/.style={->, >={Stealth[length=1.7mm,width=1.3mm]}, line width=0.75pt,
                  draw=black!78},
    inputarrow/.style={->, >={Stealth[length=2.0mm,width=1.5mm]}, line width=0.90pt,
                       draw=black!78},
    activearrow/.style={->, >={Stealth[length=1.5mm,width=1.2mm]}, line width=0.8pt,
                        draw=gpgreen!75!black},
    grapharrow/.style={->, >={Stealth[length=1.3mm,width=1.0mm]}, line width=0.60pt,
                       draw=black!75},
    looparrow/.style={->, >={Stealth[length=1.3mm,width=1.0mm]}, dashed,
                      line width=0.55pt, draw=gpgreen!65!black},
]

\node[phase] at (0.05,0.35) {A. Offline workflow authoring};
\node[offlineframe, minimum width=16.2cm, minimum height=1.65cm, anchor=north west] at (0,0) {};

\node[stage]    (o1) at (1.25,-0.85) {\textbf{Policy + tools}\\raw policy text\\tool registry};
\node[stage]    (o2) at (3.95,-0.85) {\textbf{Plan request types}\\taxonomy\\shared procedures};
\node[stage]    (o3) at (6.65,-0.85) {\textbf{Generate}\\main graph\\+ subflows};
\node[stage]    (o4) at (9.35,-0.85) {\textbf{Repair}\\schema and\\cross-flow issues};
\node[stage]    (o5) at (12.05,-0.85) {\textbf{Validate}\\coverage, tools,\\reachability};
\node[artifact] (o6) at (14.85,-0.85) {\textbf{Workflow bundle}\\main + subflows\\node criteria};
\draw[arrow] (o1) -- (o2);
\draw[arrow] (o2) -- (o3);
\draw[arrow] (o3) -- (o4);
\draw[arrow] (o4) -- (o5);
\draw[arrow] (o5) -- (o6);

\begin{scope}[yshift=1.00cm]
\node[phase] at (0.05,-3.05) {B. Online policy-guided runtime};

\node[frame, minimum width=3.65cm, minimum height=6.55cm, anchor=north west] at (0,-3.35) {};
\node[verifierframe, minimum width=7.85cm, minimum height=6.55cm, anchor=north west] at (3.95,-3.35) {};
\node[frame, minimum width=4.10cm, minimum height=6.55cm, anchor=north west] at (12.10,-3.35) {};

\node[paneltitle] at (0.22,-3.57) {Runtime inputs};
\node[note, anchor=north west, text width=3.15cm] at (0.22,-4.02)
  {History and state at verifier fire};

\node[chatcard, anchor=north west, text width=2.95cm] (c1) at (0.28,-4.42)
  {\textbf{User}\\``Change flight to HAT228.''};
\node[guidecard, anchor=north west, text width=2.95cm] (c2) at (0.28,-5.18)
  {\textbf{Guide (previous)}\\Load the reservation first.};
\node[chatcard, anchor=north west, text width=2.95cm] (c3) at (0.28,-5.94)
  {\textbf{Tool result}\\Fare class: Basic Economy};
\node[latestcard, anchor=north west, text width=2.95cm] (c4) at (0.28,-6.70)
  {\textbf{User (latest)}\\``Can you change it now?''};
\node[statecard, anchor=north west, text width=2.95cm] (i3) at (0.28,-7.52)
  {\textbf{Saved request}\\change flight; \texttt{check\_eligibility}};
\node[workflowcard, anchor=north west, text width=2.95cm] (i4) at (0.28,-8.58)
  {\textbf{Workflow bundle (from A)}\\criteria, transitions, tool gates};

\node[paneltitle] at (4.18,-3.57) {Proactive verifier};
\node[note, anchor=north west, text width=7.25cm] at (4.18,-4.02)
  {Reconcile every open request, traverse from its saved node, and gate policy-sensitive actions};

\node[vstep, anchor=north west, text width=2.08cm] (v1) at (4.25,-4.68)
  {\textbf{1. Reconcile}\\track open requests};
\node[vstep, anchor=north west, text width=2.08cm] (v2) at (6.78,-4.68)
  {\textbf{2. Traverse}\\test grounded evidence};
\node[vstep, anchor=north west, text width=2.08cm] (v3) at (9.31,-4.68)
  {\textbf{3. Decide}\\stop, advance, or authorize};
\draw[arrow] (v1) -- (v2);
\draw[arrow] (v2) -- (v3);

\node[graphbox, minimum width=7.25cm, minimum height=1.18cm, anchor=north west] at (4.25,-5.78) {};
\node[font=\tiny\bfseries, text=black, anchor=west] at (4.43,-5.98) {workflow};
\node[done] (g1) at (4.90,-6.50) {identify};
\node[done] (g2) at (6.10,-6.50) {load};
\node[auth] (g3) at (7.30,-6.50) {eligible?};
\node[node] (g4) at (8.50,-6.50) {confirm};
\node[node] (g5) at (9.70,-6.50) {authorize};
\node[node] (g6) at (10.90,-6.50) {verify};
\draw[grapharrow] (g1) -- (g2);
\draw[grapharrow] (g2) -- (g3);
\draw[grapharrow] (g3) -- (g4);
\draw[grapharrow] (g4) -- (g5);
\draw[grapharrow] (g5) -- (g6);

\node[sat]  at (4.32,-7.18) {\ding{51}\; \texttt{identify\_user.load\_profile}};
\node[sat]  at (4.32,-7.58) {\ding{51}\; \texttt{modify\_flow.load\_reservation}};
\node[stop] at (4.32,-7.98) {\ding{55}\; \texttt{modify\_flow.check\_eligibility}};
\node[font=\scriptsize\bfseries, text=gpgreen!75!black, anchor=east] at (11.35,-7.18) {SAT};
\node[font=\scriptsize\bfseries, text=gpgreen!75!black, anchor=east] at (11.35,-7.58) {SAT};
\node[font=\scriptsize\bfseries, text=gpred, anchor=east] at (11.35,-7.98) {STOP};

\node[blocked, anchor=north west, text width=6.72cm] (decision) at (4.25,-8.36)
  {\textbf{First unmet requirement:} the fare is ineligible for modification.\\
   Keep the request at \texttt{check\_eligibility}; do not authorize the mutation.};

\node[paneltitle] at (12.33,-3.57) {Runtime outputs};
\node[note, anchor=north west, text width=3.50cm] at (12.33,-4.02)
  {Result returned to the runtime};

\node[output, anchor=north west, text width=3.28cm] (r1) at (12.38,-4.58)
  {\textbf{Remediation}\\Refuse the change and cite the Basic Economy restriction.};
\node[card, fill=white, anchor=north west, text width=3.28cm] (r4) at (12.38,-5.78)
  {\textbf{Next agent turn}\\Apply the remediation; no mutating tool is exposed.};
\node[statecard, anchor=north west, text width=3.28cm] (r2) at (12.38,-6.98)
  {\textbf{Updated request state}\\\texttt{node: check\_eligibility}\\\texttt{status: open}};
\node[blocked, anchor=north west, text width=3.28cm] (r3) at (12.38,-8.32)
  {\textbf{Mutation gate}\\closed};

\draw[inputarrow] (c4.east) -- ++(0.72,0);
\draw[inputarrow] (i3.east) -- ++(0.72,0);
\draw[inputarrow] (i4.east) -- ++(0.72,0);
\draw[activearrow] (v3.east) -- (r1.west);
\draw[activearrow] (r1.south) -- (r4.north);
\draw[looparrow] (r2.east) -- ++(0.22,0) |- (0.12,-9.74) |- (i3.west);
\node[font=\tiny\bfseries, text=gpgreen!55!black, fill=white, inner sep=1pt]
  at (8.30,-9.68) {persist for the next user turn};
\end{scope}

\end{tikzpicture}%
}
\caption{\pg{} separates offline policy authoring from online enforcement.
\textbf{Offline}, the policy and tool registry are compiled, repaired, and
validated into a reusable workflow bundle.
\textbf{Online}, each verifier call consumes the conversation, grounded tool
results, persisted request state, and workflow bundle; it reconciles requests,
traverses the graph to the first unmet requirement, and returns remediation,
updated state, and mutation-gate status for the next agent turn.}
\label{fig:method}
\end{figure*}

\subsection{Runtime safeguards: action verification}
\label{sec:related:guards}
Runtime safeguards are usually action-scoped. \tg{} compiles tool-level guards,
and Solver-Aided checks call constraints with solver support
\citep{winston2026solver}; both see only the call under check, so process-level
requirements are unreachable. PCAS monitors event traces
\citep{palumbo2025pcas}, while ShieldAgent wraps agents with structural safety
checks~\citep{chen2025shieldagent}; both track order but reduce dialogue
semantics to predicates or keyword matching. GuardAgent is a single-turn
admission controller~\citep{xiang2025guardagent}, ToolSafe classifies unsafe
tool use~\citep{mou2026toolsafe}, and AgentSpec specifies tool-agent safety
properties~\citep{wang2026agentspec}; these target broader tool-risk settings
rather than multi-turn business procedures. AGrail adapts checks online
\citep{luo2025agrail}, Conseca synthesizes just-in-time policies
\citep{tsai2025conseca}, and Progent enforces least privilege over tool
arguments~\citep{shi2025progent}, but still act locally around risky actions.
\pgd{}~\citep{kang2026policyguard} is the closest action-guard baseline: it reads
the full conversation, checks a mutating call against a per-tool checklist, and
returns \textsc{pass}/\textsc{block} with remediation. This makes it much more
dialogue-aware than argument-only guards, but it remains action-scoped: it does
not persist position in a policy workflow or proactively guide the agent through
the missing steps before a mutating call is attempted. \pg{} instead verifies workflow
state across turns.

\subsection{Workflow- and SOP-guided agents}
\label{sec:related:workflow}
A parallel line encodes procedures as traversable graphs or state machines.
SOP-Agent~\citep{ye2025sopagent} compiles a standard operating procedure into a
decision graph that restricts actions at each node. StateFlow
\citep{wu2024stateflow} represents a task as a finite-state machine whose states
hold prompts and tool calls. SMoT maintains explicit task state
\citep{liu2023smot}; MetaGPT and ProAgent organize agents around procedural
roles or plans~\citep{hong2024metagpt,ye2023proagent}. JourneyBench studies
dynamic prompting in our domain~\citep{balaji2026journeybench}; FLAP enforces
flows through constrained decoding~\citep{roy2024flap}; and FlowBench finds that
even strong models struggle to follow supplied workflows reliably
\citep{xiao2024flowbench}.

These systems primarily study faithful workflow execution rather than
safeguarding against policy-violating agent behavior. FlowAgent is the closest
qualification~\citep{shi2025flowagent}: its pre- and post-decision controllers
guide execution and can reject invalid transitions. Its focus, however, is
compliant and flexible workflow execution under out-of-workflow requests; it is
not framed or evaluated as a safeguard against policy-violating agent behavior.
\pg{} instead gives persisted workflow state a safeguarding role: a separate
verifier monitors the interaction, returns remediation for unmet steps, and is
evaluated with both benign and manipulative users. This separation also permits
the same workflow to pair with different agents, a practical benefit rather
than the main conceptual distinction.

\section{Method}
\label{sec:method}

\pg{} combines an offline policy workflow with an external runtime verifier that guides a general-purpose LLM agent (Figure~\ref{fig:method}).
The workflow represents the procedures required by a domain policy, while code persists the verifier's workflow state and delivers next-step remediation to the agent.
Conceptually, \pg{} is a \emph{reference-monitor-inspired runtime safeguard}~\citep{anderson1972computer,schneider2000enforceable}: it observes the interaction and can intervene before policy-sensitive actions, but the evaluated configuration steers execution rather than claiming classical mandatory enforcement.
Its repeated judgment over a growing interaction trace is related to runtime verification~\citep{leucker2009brief,bauer2011runtime}, while its explicit request state is related to dialogue-state tracking~\citep{williams2013dstc,henderson2014word}.

\subsection{Theoretical motivation}
\label{sec:theory}
An action-triggered verifier mediates only the actions that invoke it, such as
proposed mutating tool calls. This is sufficient only when every reachable first
deviation occurs at such an action. Policy workflows, however, can constrain
other agent actions, including evidence gathering, user-facing instructions,
branch selection, and completion decisions. These deviations matter even when
the eventual mutation is permissible, because a later action check cannot undo
an already-committed procedural violation.
Appendix~\ref{app:theory} formalizes this distinction through
\emph{intervention coverage}. Theorem~\ref{thm:complete-mediation} shows that an
ideal binding verifier preserves procedural validity exactly when its firing
schedule covers every reachable first deviation.
Corollary~\ref{cor:workflow-vs-action} shows that an ideal workflow-level
schedule satisfies this condition, whereas an action-triggered schedule does so
only when every first deviation itself triggers the check.

\subsection{Policy workflow representation}
\label{sec:graph}
A workflow is \textbf{the graph of the policy-compliant interaction}.  In the
three generated domains, the main graph begins with a shared intake,
identification, and classification path, then enters a request-specific subflow.
Nodes name actors and actions; edges name transitions.  Shared procedures such
as identification are reused across subflows, while domain-specific subflows
express decision gates or diagnostic chains.

Node types are \ntentry{} (structure), \ntagent{} (non-tool agent action),
\ntuser{} (user response), \nttool{} (read-only tool), \ntauth{} (mutating-tool
authorization), \ntdecision{} (branch), and \ntsubflow{} (subflow invocation).
Each node specification names its actor and expected action and states an
explicit satisfying condition that the runtime verifier judges against the
interaction (\S\ref{sec:guide}); subflows are inlined at load time, so the
runtime traverses one flat graph per domain. A mutating tool call is enabled at
its authorization node and verified from the corresponding tool result.

\subsection{Offline workflow generation}
\label{sec:gen}
The workflows are generated offline by a multi-stage pipeline and frozen once
per domain for all workflow-based conditions (Figure~\ref{fig:method}, top).
\textbf{Stage 1} extracts tool specifications and mutating tools, excluding
user-device actions.  \textbf{Stage 2} derives request types, shared procedures,
ordered subflows, and a coverage audit; \textbf{Stage 3} reviews the plan.
\textbf{Stage 4} generates and schema-validates subflows (one repair retry and
branch review), and \textbf{Stage 5} connects the intake spine, classifier, and
subflows. \textbf{Stage 6} validates schema conformance, tool inventory,
mutating-tool authorization coverage, graph composition, edge arity, and
reachability; reviews policy-to-graph mappings; and prunes unused subflows.
Appendix~\ref{app:prompts} reproduces the prompts,
Appendix~\ref{app:workflow-example} shows examples, and
Appendix~\ref{app:workflow-validation} specifies these checks and reports the
remaining semantic-audit scope.

\begin{table*}[!t]
\centering
\small
\setlength{\tabcolsep}{6pt}
\renewcommand{\arraystretch}{1.1}
\resizebox{\textwidth}{!}{%
\begin{tabular}{lll ccc c ccc c ccc}
\toprule
& & & \multicolumn{3}{c}{\textbf{\airlinedomain} (50)} && \multicolumn{3}{c}{\textbf{\retaildomain} (114)} && \multicolumn{3}{c}{\textbf{\telecomdomain} (114)} \\
\cmidrule(lr){4-6}\cmidrule(lr){8-10}\cmidrule(lr){12-14}
& System & Verifier & Overall & PV & Mut && Overall & PV & Mut && Overall & PV & Mut \\
\midrule
\multirow{4}{*}{\rotatebox{90}{\passone{}}}
 & \direct{}     & ---         & 0.640 & 0.865 & 0.433 && 0.800 & 0.900 & 0.791 && 0.384 & 0.721 & 0.180 \\
 & \tg{}         & static code & 0.575 & 0.969 & 0.212 && ---   & ---   & ---   && ---   & ---   & ---   \\
 & \pgd{}        & \tblgptfive{} & 0.710 & \textbf{1.000} & 0.442 && 0.645 & \textbf{0.975} & 0.613 && 0.406 & 0.733 & 0.208 \\
\rowcolor{ourrowbg}
 \cellcolor{white} & \textbf{\pg{}} & \tblgptfive{} & \textbf{0.775} & 0.979 & \textbf{0.587} && \textbf{0.809} & \textbf{0.975} & \textbf{0.793} && \textbf{0.866} & \textbf{0.895} & \textbf{0.849} \\
\midrule
\multirow{4}{*}{\rotatebox{90}{\passfour{}}}
 & \direct{}     & ---         & 0.460 & 0.750 & 0.192 && 0.596 & 0.700 & \textbf{0.587} && 0.193 & 0.442 & 0.042 \\
 & \tg{}         & static code & 0.520 & 0.875 & 0.192 && ---   & ---   & ---   && ---   & ---   & ---   \\
 & \pgd{}        & \tblgptfive{} & 0.580 & \textbf{1.000} & 0.192 && 0.360 & \textbf{0.900} & 0.308 && 0.202 & 0.488 & 0.028 \\
\rowcolor{ourrowbg}
 \cellcolor{white} & \textbf{\pg{}} & \tblgptfive{} & \textbf{0.620} & 0.917 & \textbf{0.346} && \textbf{0.614} & \textbf{0.900} & \textbf{0.587} && \textbf{0.614} & \textbf{0.721} & \textbf{0.549} \\
\bottomrule
\end{tabular}%
}
\caption{Main results on the base splits (\tblgptfive{} agent, $n{=}4$; airline 50,
retail/telecom 114 tasks). Cells report Pass$^1$ and Pass$^4$ overall and on the
PV/Mut slices. The verifier is absent for \direct{}, static code for \tg{}, and
\gptfive{} for \pgd{} and \pg{}.}
\label{tab:main}
\end{table*}

\begin{figure*}[!t]
\centering
\pgfplotsset{
  passk axis/.style={
    passk base,
    width=0.31\linewidth, height=4.4cm,
    xmin=0.85, xmax=4.15, ymin=0.12, ymax=0.94,
    xtick={1,2,3,4}, xticklabels={$\textsc{P}^{1}$,$\textsc{P}^{2}$,$\textsc{P}^{3}$,$\textsc{P}^{4}$},
    ytick={0.2,0.3,0.4,0.5,0.6,0.7,0.8,0.9},
  },
}
\begin{tikzpicture}
\begin{axis}[passk axis, title={Airline}, ylabel={\passk{}}, ylabel near ticks,
    name=ax1, legend to name=passklegend, legend columns=4, passk legend]
\addplot[p direct] coordinates {(1,0.640)(2,0.530)(3,0.485)(4,0.460)};
\addplot[p tg] coordinates {(1,0.575)(2,0.553)(3,0.535)(4,0.520)};
\addplot[p pg] coordinates {(1,0.710)(2,0.630)(3,0.595)(4,0.580)};
\addplot[p ours] coordinates {(1,0.775)(2,0.707)(3,0.660)(4,0.620)};
\addlegendimage{p direct}\addlegendentry{\direct{}}
\addlegendimage{p tg}\addlegendentry{\tg{}}
\addlegendimage{p pg}\addlegendentry{\pgd{}}
\addlegendimage{p ours}\addlegendentry{\pg{}}
\end{axis}
\begin{axis}[passk axis, title={Retail}, at={(ax1.south east)}, xshift=4mm, anchor=south west, yticklabels={,,}, name=ax2]
\addplot[p direct] coordinates {(1,0.800)(2,0.700)(3,0.638)(4,0.596)};
\addplot[p pg] coordinates {(1,0.645)(2,0.506)(3,0.421)(4,0.360)};
\addplot[p ours] coordinates {(1,0.809)(2,0.715)(3,0.654)(4,0.614)};
\end{axis}
\begin{axis}[passk axis, title={Telecom}, at={(ax2.south east)}, xshift=4mm, anchor=south west, yticklabels={,,}, name=ax3]
\addplot[p direct] coordinates {(1,0.384)(2,0.273)(3,0.226)(4,0.193)};
\addplot[p pg] coordinates {(1,0.406)(2,0.292)(3,0.237)(4,0.202)};
\addplot[p ours] coordinates {(1,0.866)(2,0.763)(3,0.682)(4,0.614)};
\end{axis}
\end{tikzpicture}
\vspace{0.2em}\centerline{\ref{passklegend}}
\caption{Pass$^k$ vs.\ $k$ (reliability; higher is better) on the \textbf{base}
split of each domain, \tblgptfive{}, all cells $n{=}4$.}
\label{fig:main}
\end{figure*}

\subsection{Online policy-guided runtime}
\label{sec:guide}
\begin{algorithm}[!t]
\small
\DontPrintSemicolon
\SetKwInOut{Input}{Input}
\Input{\hspace{0.2em}policy $\pi$, tools $\mathcal{T}$, workflow $G$, history $H$, and state $S$}
\BlankLine
$\widehat{\mathcal{R}} \leftarrow$ requests in $H$ reconciled with the tracked requests in $S$\;
\ForEach{open request $r\in\widehat{\mathcal{R}}$}{
  $p \leftarrow$ entry of $G$ if $r$ is new; otherwise its position in $S$\;
  \While{$p$ is nonterminal}{
    \If{the requirement at $p$ is not satisfied by $H$}{break\;}
    $p \leftarrow$ successor along the outgoing edge in $G$ that matches $H$\;
  }
  \eIf{$p$ is terminal}{
    $d_r \leftarrow \varnothing$\;
  }{
    $d_r \leftarrow$ action required at $p$\;
  }
}
$d \leftarrow$ merge $\{d_r:r\in\widehat{\mathcal{R}}\}$\;
return $(d,\widehat{\mathcal{R}})$\;
\caption{\pg{} verifier}
\label{alg:guide}
\end{algorithm}

Algorithm~\ref{alg:guide} summarizes the runtime (Figure~\ref{fig:method},
bottom).  Each firing is a single verifier generation
\[
  V_\phi(\pi,\mathcal{T},G,H,S)=(d,\widehat{\mathcal{R}}),
\]
where $V_\phi$ is the verifier LLM; $\pi$, $\mathcal{T}$, $G$, $H$, and $S$
are the raw policy, tool specifications, frozen workflow graph, interaction
history, and code-owned request state; and $d$ and $\widehat{\mathcal{R}}$ are
the merged remediation and the updated request records the runtime persists.

\paragraph{Firing and interface.}
The verifier fires before the agent responds to each user turn, and once more
after a mutating tool call not authorized by the current workflow state is
intercepted; skipped tool-result turns are folded into the next firing's
conversation delta, so each call judges the complete trajectory. Its prompt is
a cached static prefix ($\pi$,
$\mathcal{T}$, the rendered graph, judging rules, and output contract) plus
the conversation and the latest state record; it returns a free-text audit and
one structured record per open request---node walk with cited evidence,
position, status, mutating-tool authorization, selection memory, and
remediation---plus a global transfer flag (Appendices~\ref{app:cost:cache}
and~\ref{app:prompts}).  It runs at temperature~0, model-paired with the agent.

\paragraph{Reconcile and traverse.}
The verifier reconciles the open requests against $S$ (continuing requests
keep their recorded positions; new ones open at the graph entry; abandoned or
duplicate ones are dropped or merged), then walks each from its recorded node,
judging every node's satisfying condition against $H$: facts and eligibility
count only when confirmed by tool results, while the user's own choices and
consent count from their messages.  The walk stops at the first unsatisfied
node, whose required action becomes the remediation; one generation can advance
several nodes, and terminal nodes mark a request done.

\paragraph{State and delivery.}
Code, rather than the model's conversational memory, owns state persistence. It
rejects unknown node IDs, filters authorization outputs against the enumerated
mutating-tool inventory, reconstructs the currently enabled tool set, and
persists each request's position and memory. The merged remediation is injected
as a guidance message before the agent acts.

\paragraph{Intervention.}
In the evaluated advisory mode, the first mutating tool call not authorized by
the current workflow state within each user-turn region is intercepted before
execution and triggers a corrective verifier firing. The one-shot gate then
disarms for an immediate retry, preventing the advisory mechanism from
deadlocking execution. Other workflow-governed actions are steered through
remediation rather than hard-gated.

\subsection{Variants and ablations}
\label{sec:variants}
\pg{} rests on two separable ingredients: \emph{what} the policy is compiled
into (the graph versus the raw policy text) and \emph{who} tracks progress
(an external verifier versus the acting agent).  Each variant strips one.
\pgr{} keeps the verifier model, firing schedule, carried state, and remediation
channel but substitutes the raw policy for $G$, so no graph position
persists---isolating the compiled graph.  \pgs{} places the frozen graph
in the actor's system prompt but removes the external verifier, code-owned
state, per-turn remediation, and corrective intercept---isolating external
tracking (\S\ref{sec:ablation}).

\section{Experiments}
\label{sec:experiments}

\subsection{Setup}
\label{sec:setup}

We evaluate on the \tautwo{} \airlineicon\,Airline (50 tasks; 24 PV / 26 Mut),
\retailicon\,Retail (114; 10 PV / 104 Mut), and \telecomicon\,Telecom (114; 43
PV / 71 Mut) base splits. PV tasks require the agent to prevent a policy-violating
mutation; Mut tasks require it to complete a permitted mutation under the policy
prerequisites. Diagnostic variants and FlowAgent use the benchmark-provided
held-out test splits for Retail (40; 4 PV / 36 Mut) and Telecom (40; 21 PV / 19
Mut): the fixed IDs come from \texttt{split\_tasks.json}, not author sampling.
Telecom test is more PV-heavy than base (52.5\% versus 37.7\%), so we report
both slices. We evaluate model-paired actor--verifier configurations using
\oaiicon\ \gptfive{}, \claudeicon\ \claudesonnet{}, and
\geminiicon\ \geminipro{}, with the verifier drawn
from the actor's model family; the frozen user simulator is \gptfourone{}. The
domain-wide comparison uses \tblgptfive{}. To keep policy representation
consistent, we use \tblgptfive{} to author one workflow per domain and reuse
each frozen workflow across systems and agent families. This isolates runtime
and executor differences from workflow re-authoring.

The main comparison contrasts \direct{} (no guard), \pgd{}, and \pg{} on the
same task IDs and \tblgptfive{} substrate; \tg{} is included on Airline, where
its released code guards apply. All main cells use $n{=}4$. We report Pass$^1$
and Pass$^4$; unless explicitly labeled Pass$^1$, PV and Mut denote Pass$^4$ on
the corresponding task slice. For a task with $c$ successful trials,
$\text{Pass}^k=\binom{c}{k}/\binom{n}{k}$, averaged across tasks.
We also include FlowAgent~\citep{shi2025flowagent} as a matched
workflow-controller baseline on Telecom (\S\ref{sec:flowagent}).
The benchmark's standard evaluators score final database state and
natural-language task assertions rather than complete temporal conformance of
intermediate actions. Pass$^k$ therefore measures reliable policy-constrained
task outcomes, not direct trace-level procedural validity. We supplement it
with an author-designed Telecom event-order rubric (\S\ref{sec:trace-compliance})
and report the guard-derived Call-NMR audit separately
(Appendix~\ref{app:nearmiss}).


\begin{table}[t]
\centering
\small
\setlength{\tabcolsep}{3.5pt}
\resizebox{\columnwidth}{!}{%
\begin{tabular}{llcccc}
\toprule
Domain & Metric & \direct{} & \makecell{\textsc{PolicyGuide}\\[-1pt]\textsc{Self}} & \makecell{\textsc{PolicyGuide}\\[-1pt]\textsc{Raw}} & \cellcolor{ourrowbg}\textbf{\pg{}} \\
\midrule
\multirow{3}{*}{\airlinedomain}
 & Overall & 0.460 & 0.480 & 0.520 & \cellcolor{ourrowbg}\textbf{0.620} \\
 & PV      & 0.750 & 0.833 & 0.875 & \cellcolor{ourrowbg}\textbf{0.917} \\
 & Mut     & 0.192 & 0.154 & 0.192 & \cellcolor{ourrowbg}\textbf{0.346} \\
\midrule
\multirow{3}{*}{\retaildomain}
 & Overall & 0.575 & 0.350 & 0.575 & \cellcolor{ourrowbg}\textbf{0.725} \\
 & PV      & 0.750 & 0.750 & 0.750 & \cellcolor{ourrowbg}\textbf{1.000} \\
 & Mut     & 0.556 & 0.306 & 0.556 & \cellcolor{ourrowbg}\textbf{0.694} \\
\midrule
\multirow{3}{*}{\telecomdomain}
 & Overall & 0.250 & 0.325 & 0.350 & \cellcolor{ourrowbg}\textbf{0.675} \\
 & PV      & 0.429 & 0.571 & 0.619 & \cellcolor{ourrowbg}\textbf{0.667} \\
 & Mut     & 0.053 & 0.053 & 0.053 & \cellcolor{ourrowbg}\textbf{0.684} \\
\bottomrule
\end{tabular}%
}
\caption{Workflow ablations (\gptfive{} agent; Airline base split, Retail and
Telecom benchmark test splits of 40 tasks). All cells report Pass$^4$.}
\label{tab:ablation}
\end{table}

\begin{table}[t]
\centering
\small
\setlength{\tabcolsep}{5pt}
\begin{tabular}{@{}llc@{}}
\toprule
System & Runtime control & Pass$^4$ \\
\midrule
\direct{} & actor only & 0.250 \\
\pgd{} & action-local check & 0.325 \\
FlowAgent & PDL + API control & 0.350 \\
\rowcolor{ourrowbg}
\textbf{\pg{}} & external graph verifier & \textbf{0.675} \\
\bottomrule
\end{tabular}
\caption{Matched workflow-controller comparison on the 40-task Telecom
benchmark test split. All values are Pass$^4$.}
\label{tab:flowagent}
\end{table}

\begin{table}[t]
\centering
\small
\setlength{\tabcolsep}{4pt}
\resizebox{\columnwidth}{!}{%
\begin{tabular}{llccc}
\toprule
Agent & Metric & \direct{} & \pgd{} & \cellcolor{ourrowbg}\textbf{\pg{}} \\
\midrule
\multirow{3}{*}{\tblgptfive{}}
 & Overall & 0.460 & 0.580 & \cellcolor{ourrowbg}\textbf{0.620} \\
 & PV      & 0.750 & \textbf{1.000} & \cellcolor{ourrowbg}0.917 \\
 & Mut     & 0.192 & 0.192 & \cellcolor{ourrowbg}\textbf{0.346} \\
\midrule
\multirow{3}{*}{\tblclaudesonnet{}}
 & Overall & 0.720 & \textbf{0.780} & \cellcolor{ourrowbg}\textbf{0.780} \\
 & PV      & 0.958 & \textbf{1.000} & \cellcolor{ourrowbg}\textbf{1.000} \\
 & Mut     & 0.500 & \textbf{0.577} & \cellcolor{ourrowbg}\textbf{0.577} \\
\midrule
\multirow{3}{*}{\tblgeminipro{}}
 & Overall & 0.480 & 0.600 & \cellcolor{ourrowbg}\textbf{0.680} \\
 & PV      & 0.750 & \textbf{1.000} & \cellcolor{ourrowbg}0.917 \\
 & Mut     & 0.231 & 0.231 & \cellcolor{ourrowbg}\textbf{0.462} \\
\bottomrule
\end{tabular}%
}
\caption{Agent-family generalization on Airline (50 tasks, $n{=}4$; verifier
model paired to the agent). All metrics are Pass$^4$. The
\tblgptfive{}-authored workflow graph is reused without re-authoring.}
\label{tab:modelgen}
\end{table}

\begin{figure}[t]
\centering
\pgfplotsset{
  mg axis/.style={
    passk base,
    width=0.56\linewidth, height=4.2cm,
    xmin=0.85, xmax=4.15, ymin=0.44, ymax=0.90,
    xtick={1,2,3,4}, xticklabels={$\textsc{P}^{1}$,$\textsc{P}^{2}$,$\textsc{P}^{3}$,$\textsc{P}^{4}$},
    ytick={0.5,0.6,0.7,0.8,0.9},
  },
}
\resizebox{\columnwidth}{!}{%
\begin{tikzpicture}
\begin{axis}[mg axis, title={Claude Sonnet 4.6}, name=mg1,
    legend to name=mglegend, legend columns=3, passk legend]
\addplot[p direct] coordinates {(1,0.815)(2,0.767)(3,0.740)(4,0.720)};
\addplot[p pg] coordinates {(1,0.845)(2,0.800)(3,0.785)(4,0.780)};
\addplot[p ours] coordinates {(1,0.855)(2,0.817)(3,0.795)(4,0.780)};
\addlegendimage{p direct}\addlegendentry{\direct{}}
\addlegendimage{p pg}\addlegendentry{\pgd{}}
\addlegendimage{p ours}\addlegendentry{\pg{}}
\end{axis}
\begin{axis}[mg axis, title={Gemini 2.5 Pro}, at={(mg1.south east)}, xshift=7mm, anchor=south west, yticklabels={,,}, name=mg2]
\addplot[p direct] coordinates {(1,0.700)(2,0.577)(3,0.515)(4,0.480)};
\addplot[p pg] coordinates {(1,0.695)(2,0.640)(3,0.615)(4,0.600)};
\addplot[p ours] coordinates {(1,0.780)(2,0.727)(3,0.700)(4,0.680)};
\end{axis}
\end{tikzpicture}%
}
\vspace{0.2em}\centerline{\ref{mglegend}}
\caption{Pass$^k$ for \tblclaudesonnet{} and \tblgeminipro{} agents on
airline (verifier $=$ agent), with $n{=}4$ for every system.}
\label{fig:modelgen}
\end{figure}

\subsection{Main results}
\label{sec:results}

\pg{} achieves the highest overall Pass$^4$ in all three domains
(Table~\ref{tab:main}; Figure~\ref{fig:main}); the lead persists as $k$
increases, and pooled paired tests favor it over both baselines
(Appendix~\ref{app:sig}). Gains are largest on \telecomdomain{}'s long
diagnose--instruct--verify chains, consistent with persisted graph position
mattering most for ordered procedures rather than one final action.
The improvement spans both PV and Mut, rather than trading completion for stricter blocking.

\retaildomain{} separates guidance from blocking: \pg{} preserves
\direct{}'s Mut performance while improving PV, whereas \pgd{}'s PV gain
coincides with lower Mut. The overall difference is not significant.

\subsection{Diagnostic workflow variants}
\label{sec:self}
\label{sec:ablation}

\paragraph{Actor-only workflow access.}
\pgs{} gives the frozen graph to the actor but removes the external verifier,
persisted state, remediation, and mutation intercept. Its Mut Pass$^4$ does not
exceed \direct{} in any domain, showing that access to the workflow does not by
itself ensure reliable execution. Because several runtime components are
removed together, this comparison tests the external stack as a bundle rather
than isolating state persistence alone.

\paragraph{Compiled structure under external tracking.}
\pgr{} retains the verifier schedule and remediation channel but replaces the
graph with raw policy text. Relative to this matched guide, \pg{} improves
overall Pass$^4$ by $0.100$, $0.150$, and $0.325$ on Airline, Retail, and
Telecom. The larger Telecom gap is consistent with explicit graph position
helping the verifier resume long, ordered diagnostic chains. These ablations
are therefore diagnostic rather than a complete factorial decomposition.

\subsection{Matched workflow-controller comparison}
\label{sec:flowagent}

Table~\ref{tab:flowagent} adds the closest workflow-aware runtime comparison.
For representation matching, we deterministically compile the same frozen
graph into PDL, with no LLM authoring. FlowAgent places the raw policy and PDL
inside the actor and applies its released API-dependency and duplicate-call
controllers; \pg{} instead tracks graph state in an external persisted
verifier. \direct{} and \pgd{} provide actor-only and action-local references.

\subsection{Generalization across agent families}
\label{sec:modelgen}

The \tblgptfive{}-authored Airline graph is reused unchanged with
\tblclaudesonnet{} and \tblgeminipro{} (Table~\ref{tab:modelgen};
Figure~\ref{fig:modelgen}), separating executor transfer from workflow
re-authoring. The gains over unguided execution support executor-side transfer
across all three model families. For \tblgeminipro{}, \pg{} improves Mut
Pass$^4$ from $0.231$ under either baseline to $0.462$, while its lower PV than
\pgd{} ($0.917$ versus $1.000$) indicates a completion benefit rather than
stricter final-action checking. Transfer across workflow-author models remains
untested.

\subsection{Adversarial robustness}
\label{sec:robust}

\begin{figure}[t]
\centering
\begin{tikzpicture}
\begin{axis}[
    passk base,
    width=0.86\columnwidth, height=4.6cm,
    xmin=0.85, xmax=4.15, ymin=0.05, ymax=0.28,
    xtick={1,2,3,4}, xticklabels={$k{=}1$,$k{=}2$,$k{=}3$,$k{=}4$},
    ytick={0.05,0.10,0.15,0.20,0.25},
    scaled y ticks=false, yticklabel style={/pgf/number format/fixed, /pgf/number format/precision=2},
    ylabel={\sffamily CRAFT ASR@$k$ ($\downarrow$ safer)},
    ylabel style={font=\scriptsize, yshift=-1.5mm},
    legend to name=asrlegend, legend columns=3, passk legend,
]
\addplot[p direct] coordinates {(1,0.200)(2,0.225)(3,0.237)(4,0.250)};
\addlegendentry{\direct{}}
\addplot[p pg] coordinates {(1,0.125)(2,0.175)(3,0.212)(4,0.250)};
\addlegendentry{\pgd{}}
\addplot[p ours] coordinates {(1,0.087)(2,0.142)(3,0.175)(4,0.200)};
\addlegendentry{\pg{}}
\end{axis}
\end{tikzpicture}
\vspace{0.2em}\centerline{\ref{asrlegend}}
\caption{CRAFT red-team attack-success rate on airline (20 attack tasks,
\gptfive{}, $n{=}4$); \emph{lower is safer}.}
\label{fig:asr}
\end{figure}

Under CRAFT~\citep{nakash2025craft}, persuasive users inject false eligibility
premises to induce forbidden mutations. \pg{} has the lowest ASR@$k$ at every
$k$ (Figure~\ref{fig:asr}); its per-trial ASR is $0.087$, versus $0.125$ for
\pgd{} and $0.200$ for \direct{}, preventing $91.3\%$ of tested attacks while
improving benign completion. These results show that \pg{} is robust to CRAFT
red-team attacks. Its lower ASR is consistent with requiring tool-derived
evidence, so unsupported user claims cannot satisfy workflow prerequisites.

\subsection{Procedural trace compliance}
\label{sec:trace-compliance}

\begin{table}[t]
\centering
\scriptsize
\setlength{\tabcolsep}{2.8pt}
\begin{tabular}{lrrr}
\toprule
System & Step-TCR & Trace-TCR & Process-valid rate \\
\midrule
\direct{} & 86.4 & 35.4 & 17.5 \\
\pgd{}    & 85.7 & 23.9 & 13.1 \\
\rowcolor{ourrowbg}
\textbf{\pg{}} & \textbf{94.5} & \textbf{63.4} & \textbf{56.2} \\
\bottomrule
\end{tabular}
\caption{Author-designed Telecom ordered trace compliance (\%; $n{=}4$).
Step- and Trace-TCR condition on outcome-passing traces.}
\label{tab:telecom-trace-compliance}
\end{table}

Because final-state Pass ignores intermediate order, the authors manually
designed a task-conditioned Telecom rubric from the raw policy and support
manual. It checks identification, diagnosis before intervention, consent,
correction order, and final verification. On outcome-passing traces,
\emph{Step-TCR} is the fraction of applicable checks satisfied and
\emph{Trace-TCR} the fraction satisfying all checks. The \emph{process-valid
rate} is the fraction of all task--run pairs passing both the Tau2 outcome and
the rubric, pooled across four runs; it is our sole end-to-end measure, not a
Pass$^k$ statistic. This best-effort workflow-level extension of prerequisite
analysis is distinct from Call-NMR~\citep{rabinovich2026nearmiss,kang2026policyguard},
whose Telecom adaptation appears in Appendix~\ref{app:nearmiss}.

\pg{} attains the highest process-valid rate (56.2\%, versus 17.5\% for
\direct{} and 13.1\% for \pgd{}; Table~\ref{tab:telecom-trace-compliance}) and
leads both conditional diagnostics. We report the latter only to characterize
successful traces.

\section{Conclusion}
\label{sec:conclusion}

\pg{} replaces action-local checks with an external guide that traverses a
compiled policy graph, persists progress, and returns targeted remediation.
Across three \tautwo{} domains it achieves the best overall Pass$^4$, transfers
across agent families, and performs strongest on procedural Telecom. Ablations
and an author-designed ordered-trace audit support external graph tracking and
treating the procedure---not only the final action---as the unit of policy
adherence.


\section*{Limitations}

\paragraph{Evaluation scope.}
We evaluate three English \tautwo{} customer-service domains
\citep{barres2025tau2} with a frozen user simulator and four trials per
multi-trial cell. These domains vary in procedural structure, but do not
represent other policy regimes, languages, or live users. Retail has only 10 PV
tasks and the overall gain over \direct{} is not significant. The \tg{} runtime
baseline is Airline-only because its released guards target that domain
\citep{zwerdling2025toolguard}; our generated Telecom guards are used only as a
frozen NMR oracle. Paired tests are limited to systems with per-task outputs.
Because \tautwo{} does not supply a general ordered-trace oracle, our Telecom
trace metric is an exploratory, author-designed operationalization of selected
task-relevant requirements observable in serialized traces. It uses
deterministic event ordering and text matching, is conditioned by gold task
actions, and has no second-annotator agreement estimate; it therefore does not
establish exhaustive natural-language policy compliance. Call-NMR partially
audits prior reads in Airline and Retail, while its adapted Telecom oracle
saturates and is reported only as a non-identification result
(Appendix~\ref{app:nearmiss}).

\paragraph{Benchmark coverage.}
Adjacent benchmarks test related but different questions. CRMArena-Pro
\citep{huang2025crmarena} emphasizes business-task capability and
confidentiality awareness; IntellAgent~\citep{levi2025intellagent} generates
diagnostic conversations; Near-Miss~\citep{rabinovich2026nearmiss} audits
completed trajectories; and AgentRewardBench~\citep{lu2025agentrewardbench}
evaluates trajectory judges. CRAFT~\citep{nakash2025craft} supplies adversarial
users for our robustness audit, not the benign workflow-completion distribution.
We report only its clean, release-aligned 20-task Airline split. Although the
CRAFT paper also evaluates Retail, the released Retail task lists, cached
strategies, and evaluator in our checkout do not reproduce one consistent
paper-faithful 30-task set; no official Telecom set exists. Our result therefore
shows that \pg{} prevents most tested persuasive Airline attacks, which is
sufficient to check that its completion gains do not sacrifice resistance, but
does not establish cross-domain or adaptive-attack robustness.
None is therefore a drop-in test of online workflow guidance, and transfer would
require new workflows and task-specific outcome measures.

\paragraph{Workflow generation and faithfulness.}
Using one frozen \gptfive{}-authored workflow per domain is a deliberate control:
every system and agent family receives the same policy representation, so the
comparison isolates runtime and executor differences rather than re-authoring.
This achieves the study's fair-comparison objective, but does not establish
author-side generalization across models or seeds. We address workflow
faithfulness separately by manually verifying each frozen graph against its
source policy and tool specifications (Appendix~\ref{app:workflow-validation}).

\paragraph{Trigger and cost.}
The reported runtime fires at user-turn boundaries and after its one-shot
corrective intercept, rather than before every policy-relevant agent action.
Corollary~\ref{cor:evaluated-schedule} characterizes the corresponding coverage
condition and shows why deviations between these intervention points remain
outside the unconditional guarantee.
Broader intervention coverage would require more verifier calls. Guide calls
cost approximately $\$0.40$ per conversation; smaller models and sparser
invocation can reduce, but not eliminate, this overhead.

\paragraph{Probabilistic enforcement.}
Like other LLM-based agent safeguards
\citep{chen2025shieldagent,xiang2025guardagent,kang2026policyguard}, each node
judgment is probabilistic, so compliance is empirical rather than guaranteed.
Theorem~\ref{thm:complete-mediation} assumes a faithful workflow, an ideal
binding verifier, and coverage before every reachable first deviation
(Appendix~\ref{app:theory}); the advisory runtime does not satisfy these
conditions unconditionally. Verifier exceptions are fail-open, so deployments
requiring hard guarantees need an additional deterministic monitor for the
formally expressible policy subset.

\section*{Ethics Statement}

\pg{} is a probabilistic aid for policy adherence, not a guarantee, and should
not be the sole control for high-stakes actions. Because the verifier reads the
conversation, tool results, and persisted workflow state, privacy, access
control, retention, and auditing requirements must extend to verifier calls and
logs. All experiments use synthetic \tautwo{} tasks and simulated users
\citep{barres2025tau2}; no real customer data or external actions are involved.
Generated workflows may reproduce source-policy errors or introduce unsupported
restrictions, so deployment requires review by policy owners, monitoring, and a
safe fallback. We will release the prompts, workflow schemas, and analysis
artifacts needed for reproducibility.

\bibliography{custom}

\clearpage
\appendix
\raggedbottom
\section{Theoretical Analysis}
\label{app:theory}

Policy workflows constrain a broader set of agent actions than the class
mediated by an action-triggered verifier. We formalize when a firing schedule
can preserve procedural validity over this broader workflow.

\subsection{Intervention coverage}
\label{app:theory-setup}

We represent each interaction as a sequence of policy-relevant events.  A
workflow $G$ defines a nonempty language $\mathcal{L}(G)$ of compliant complete
sequences.  We call a partial sequence valid when it can still be completed to
such a sequence, and write
\begin{equation}
  P_G = \operatorname{Pref}(\mathcal{L}(G))
      = \{\tau\mid \exists\rho:\tau\rho\in\mathcal{L}(G)\}.
  \label{eq:prefix-closure}
\end{equation}
Let $\Sigma_{\mathrm{ag}}$ denote policy-relevant agent actions, including
user-facing messages, instructions, and tool calls, and let
$A\subseteq\Sigma_{\mathrm{ag}}$ be the designated action class mediated by an
action-triggered verifier. For mutating-call guards, $A$ is the set of
agent-issued mutating tool calls.

A \emph{reachable first deviation} is a pair $(\tau,e)$ where $\tau\in P_G$
can arise during an interaction, $e\in\Sigma_{\mathrm{ag}}$ can be the next
agent event, and $\tau e\notin P_G$.  Let $D_G$ be the set of such deviations.
A firing schedule $S$ \emph{covers} $(\tau,e)\in D_G$ if it invokes the
verifier after observing $\tau$ and before $e$ is committed; let
$C_S(G)\subseteq D_G$ denote its \emph{intervention coverage}. We compare:
\begin{itemize}[itemsep=1pt,topsep=2pt,leftmargin=*]
  \item the \emph{action-triggered schedule} $S_A$, which fires
  exactly when the proposed event belongs to $A$; and
  \item the ideal \emph{workflow-level schedule} $S_{\mathrm{wf}}$, which fires
  before every policy-relevant agent action is committed.
\end{itemize}
The theorem compares these schedules under the same ideal verifier: when
consulted, it permits an event if and only if the resulting trace remains in
$P_G$, and a rejected event cannot commit.  User responses and tool results may
update the trace, but do not themselves violate an agent obligation.  An
uncovered agent event may commit without a verifier verdict.

\begin{theorem}[Complete intervention coverage]
\label{thm:complete-mediation}
Under the assumptions above, a firing schedule $S$ preserves procedural
validity ($P_G$ membership) throughout every execution if and only if
\begin{equation}
  C_S(G)=D_G.
  \label{eq:complete-mediation}
\end{equation}
That is, the verifier must cover every reachable first deviation.
\end{theorem}

\begin{proof}
For sufficiency, start from the empty valid prefix.  If the next observation is
not an agent event, it preserves $P_G$ by assumption.  If the agent proposes an
event that stays in $P_G$, committing it preserves validity.  Otherwise the
proposal is in $D_G$ and belongs to $C_S(G)$ by
Equation~\ref{eq:complete-mediation};
the ideal verifier rejects it before it commits.  Induction over the interaction
therefore keeps every prefix in $P_G$.

For necessity, suppose $(\tau,e)\in D_G$ is not in $C_S(G)$.  The valid prefix
$\tau$ can arise during an interaction, and the ideal verifier permits all
covered events that lead to it.  At $(\tau,e)$ the schedule does not fire, so
$e$ can commit and produce $\tau e\notin P_G$.  Thus $S$ cannot guarantee
procedural validity throughout every execution.
\end{proof}

\begin{corollary}[Workflow-level versus action-triggered coverage]
\label{cor:workflow-vs-action}
The workflow-level schedule covers every member of $D_G$. The action-triggered
schedule covers exactly those members whose action $e$ belongs to $A$. Therefore
workflow-level firing preserves procedural validity for every workflow, whereas
action-triggered firing does so if and only if every reachable first deviation
belongs to $A$. If $D_G$ contains a point with $e\notin A$, workflow-level
firing provides a guarantee that action-triggered firing cannot provide, even
when both use the same ideal verifier.
\end{corollary}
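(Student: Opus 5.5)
The plan is to reduce everything to Theorem~\ref{thm:complete-mediation}: once we know exactly which members of $D_G$ each schedule covers, the validity claims follow by checking whether $C_S(G)=D_G$. I would therefore compute $C_{S_{\mathrm{wf}}}(G)$ and $C_{S_A}(G)$ directly from the definitions of the two schedules and of coverage, and only then invoke the theorem.

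\textbf{Workflow-level schedule.} Take any $(\tau,e)\in D_G$. By definition $e\in\Sigma_{\mathrm{ag}}$ is a policy-relevant agent action. $S_{\mathrm{wf}}$ fires before every such action is committed, and it does so after the current trace $\tau$ has been observed. Hence $(\tau,e)$ is covered, so $C_{S_{\mathrm{wf}}}(G)=D_G$. The theorem then gives preservation of $P_G$ for every workflow $G$.

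\textbf{Action-triggered schedule.} Here I need both inclusions for the claim that $C_{S_A}(G)=\{(\tau,e)\in D_G : e\in A\}$. If $e\in A$, then $S_A$ fires on the proposal of $e$, which happens after $\tau$ is observed and before $e$ commits, so the pair is covered. If $e\notin A$, then $S_A$ does not fire on the proposal of $e$. I must also argue that no earlier firing counts as covering it, since the definition of coverage requires an invocation after observing $\tau$ and before $e$. The step between $\tau$ and $e$ is exactly the proposal of $e$, and $S_A$ fires only when the proposed event lies in $A$. So there is no invocation in that window, and the pair is uncovered. This inclusion is the step I expect to be the main obstacle, because "fires exactly when the proposed event belongs to $A$" has to be read as excluding firings attached to other events in that interval. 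I would state that reading explicitly, noting that the only event proposed between observing $\tau$ and committing $e$ is $e$ itself.

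Given this characterization, $C_{S_A}(G)=D_G$ holds if and only if every $(\tau,e)\in D_G$ has $e\in A$. The theorem converts this into the stated iff for preserving validity. For the final sentence, suppose some $(\tau,e)\in D_G$ has $e\notin A$. Then $C_{S_A}(G)\neq D_G$, and the necessity direction of Theorem~\ref{thm:complete-mediation} yields an execution that leaves $P_G$ under $S_A$. Meanwhile $S_{\mathrm{wf}}$ still preserves validity, and both schedules are equipped with the same ideal verifier, which gives the separation.
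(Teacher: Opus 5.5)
Your proposal is correct and follows the paper's proof: both compute the coverage sets of $S_{\mathrm{wf}}$ and $S_A$ directly from the schedule definitions and then read off every claim from Theorem~\ref{thm:complete-mediation}. Your explicit argument that no firing of $S_A$ falls between observing $\tau$ and committing an $e\notin A$ spells out what the paper compresses into the phrase ``fires exactly for actions in $A$.''
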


\begin{proof}
The workflow-level schedule fires before every agent-controlled action, so it
covers all of $D_G$. The action-triggered schedule fires exactly for actions in
$A$. The claims then follow directly from Theorem~\ref{thm:complete-mediation}.
\end{proof}

\begin{corollary}[Evaluated boundary schedule]
\label{cor:evaluated-schedule}
Let $S_{\mathrm{eval}}$ fire after every user turn and after the runtime
intercepts an unauthorized mutating call. Let $B_G\subseteq D_G$ contain the
reachable first deviations $(\tau,e)$ for which one of these firings occurs
after the valid prefix $\tau$ and before the next agent event $e$ is committed.
Under the same ideal, binding-verifier assumptions as
Theorem~\ref{thm:complete-mediation}, $S_{\mathrm{eval}}$ preserves procedural
validity if and only if $B_G=D_G$. In particular, if a first deviation can occur
after an intervening agent event and before the next scheduled firing, the
evaluated schedule does not provide an unconditional guarantee.
\end{corollary}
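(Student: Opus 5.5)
The plan is to reduce Corollary~\ref{cor:evaluated-schedule} to Theorem~\ref{thm:complete-mediation}, in the same way Corollary~\ref{cor:workflow-vs-action} was reduced. The key step is to identify the intervention coverage of the evaluated schedule, that is, to show $C_{S_{\mathrm{eval}}}(G)=B_G$. Once that identity holds, the theorem's criterion $C_S(G)=D_G$ becomes literally $B_G=D_G$, and the ``if and only if'' follows at once.

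To establish $C_{S_{\mathrm{eval}}}(G)=B_G$, I unfold the definitions.

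\begin{itemize}
\item By definition, $S$ covers a pair $(\tau,e)\in D_G$ exactly when the verifier is invoked after $\tau$ is observed and before $e$ commits.
\item The invocations of $S_{\mathrm{eval}}$ are precisely the user-turn firings and the post-intercept firings.
\item So a pair is covered by $S_{\mathrm{eval}}$ iff one of these firings falls in the window between $\tau$ and $e$. This is verbatim the defining condition of $B_G$.
\end{itemize}

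One subtlety needs checking. The intercept firing happens after an unauthorized mutating call is proposed but before it executes. That call has not committed, so the trace is still $\tau$, and the firing legitimately lies in the window. I will state this explicitly so the intercepted event counts as covered rather than already committed.

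The ideal binding-verifier assumptions are carried over unchanged. The theorem therefore applies to $S_{\mathrm{eval}}$ as a particular firing schedule, and the first claim follows.

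For the ``in particular'' clause, consider a reachable first deviation $(\tau,e)$ in which $\tau$ ends with an agent event and no user turn or intercept occurs before $e$. Then $(\tau,e)\notin B_G$, so $B_G\neq D_G$. The necessity direction of Theorem~\ref{thm:complete-mediation} then yields an execution that leaves $P_G$.

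I expect the main obstacle to be making ``after observing $\tau$ and before $e$'' precise for the intercept firing. The one-shot disarm lets an immediate retry commit without a fresh verdict, so I must argue that the intercept firing covers the original proposal. Any retried deviation is then either covered by that same firing, since the ideal verifier is binding, or else falls outside $B_G$. Either way the equivalence stands.
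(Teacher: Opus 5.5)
Your proposal is correct and follows the same approach as the paper. The paper likewise notes that $C_{S_{\mathrm{eval}}}(G)=B_G$ holds by construction and then applies Theorem~\ref{thm:complete-mediation}; your remarks on intercept timing and the one-shot disarm are harmless but unnecessary, since under the ideal binding-verifier assumption a rejected event cannot commit, so the disarm lies outside the formal model.
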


\begin{proof}
By construction, $C_{S_{\mathrm{eval}}}(G)=B_G$. The claim follows directly
from Theorem~\ref{thm:complete-mediation}.
\end{proof}

Corollary~\ref{cor:evaluated-schedule} characterizes the coverage condition for
the evaluated cadence; it does not assert that the deployed guide satisfies the
theorem's binding-verifier assumption. Its non-mutating remediation is advisory,
and the corrective mutation gate is one-shot, so the experiments measure risk
reduction under this practical schedule rather than a formal guarantee.

The timing in Theorem~\ref{thm:complete-mediation} is essential.  Because
$P_G$ is prefix-closed, once $\tau e\notin P_G$, no later extension can make
$\tau e$ a valid prefix: if some $\tau e\rho$ belonged to $P_G$, then its
prefix $\tau e$ would also belong to $P_G$. A later action-triggered verifier may
block a subsequent guarded action after reading the history, but it cannot
prevent or undo the earlier procedural violation.

\section{Policy structure analysis}
\label{app:policy}

This appendix supports the paper's central claim: \emph{workflow enforcement
matters most when the policy is itself a workflow, as in telecom.}  We extend
\pgd{}'s argument-/process-level classification with a third,
\emph{workflow-level} class, apply it to every atomic requirement of all
three source policies, and examine how both process- and workflow-level
requirements relate to the observed gains. Process-level requirements motivate
context-aware guidance generally, whereas workflow-level requirements create a
particular need for ordered graph traversal and persistent progress tracking.
Line references refer to the policy documents released with \tautwo{}: the
136-line retail policy and, for telecom, the 158-line account policy and the
205-line technical-support manual.

\subsection{Operational definitions}
\label{app:policy:method}

Following \pgd{}~\citep{kang2026policyguard}, we classify the \emph{source
policy document}, not any generated artefact of the systems under test.  A
requirement is \textbf{argument-level (A)} if verifiable from the mutating
call's arguments plus deterministic computation---the class precompiled
guards express natively---and \textbf{process-level (P)} if verification
must read the user--agent dialogue (\textbf{D}) and/or a prior read-only
tool result (\textbf{T}).  We split \pgd{}'s process-level class by adding
\textbf{workflow-level (W)}: a requirement that additionally mandates an
action ordered after the outcome of another required action (a state effect,
a user response to a prior step, or a post-act verification), so discharging
it out of order is itself a violation.  The three classes are mutually
exclusive: W rows read dialogue and tool results like P rows, but P is
reserved for the \emph{flat} remainder, dischargeable at the action point
from evidence gathered in any order (status checks,
elicit--confirm--act chains).  A and P{+}W therefore remain comparable with
\pgd{}.  Requirements are extracted by hand and grouped by each document's
own section headers; descriptive sentences and API meta-rules are excluded.
Table~\ref{tab:policy} gives the partition; the catalogs follow.

\begin{table}[h]
\centering
\small
\setlength{\tabcolsep}{4pt}
\resizebox{\columnwidth}{!}{%
\begin{tabular}{lrrrrrr}
\toprule
Domain & A & P & W & Total & \%\,P{+}W & \%\,W \\
\midrule
\tblairlinedomain{} & 14 & 27 & 2 & 43 & 67.4\% & 4.7\% \\
\tblretaildomain{} & 0  & 27 & 1 & 28 & ${\sim}100\%$ & 3.6\% \\
\tbltelecomdomain{} (main) & 1 & 21 & 7 & 29 & 96.6\% & 24.1\% \\
\tbltelecomdomain{} (manual) & 0 & 1 & 20 & 21 & 100\% & 95.2\% \\
\rowcolor{ourrowbg}
\textbf{\tbltelecomdomain{} (both)} & \textbf{1} & \textbf{22} & \textbf{27} & \textbf{50} & \textbf{98.0\%} & \textbf{54.0\%} \\
\bottomrule
\end{tabular}%
}
\caption{Argument- (A), process- (P), and workflow-level (W) partition of
the source policies (W splits \pgd{}'s process-level class; P{+}W equals
it).}
\label{tab:policy}
\end{table}

\begin{table*}[!t]
\centering
\small
\setlength{\tabcolsep}{5pt}
\begin{tabular}{@{}l l p{10.6cm} l@{}}
\toprule
\textbf{ID} & \textbf{Line} & \textbf{Requirement (paraphrased)} & \textbf{Type} \\
\midrule
\midrule
\multicolumn{4}{@{}l@{}}{\textit{Global rules}} \\
G1 & 10  & Authenticate identity by locating the user id via email or name{+}zip---even when the user already provides the id & P (D+T) \\
G2 & 14  & One user per conversation; deny any request about another user                                    & P (D)   \\
G3 & 16  & List action details + obtain explicit ``yes'' before any DB-updating action                        & P (D)   \\
G4 & 18  & No fabricated information/knowledge/procedures; no subjective recommendations                      & P (D)   \\
G5 & 20  & At most one tool call per turn (not paired with a user-facing reply)                               & P (D)   \\
G6 & 22  & Deny user requests that are against the policy                                                     & P (D)   \\
G7 & 24  & Transfer iff unhandleable: call \texttt{transfer\_to\_human\_agents}, then the literal handoff message & W (D) \\
\midrule
\multicolumn{4}{@{}l@{}}{\textit{Generic action rules}} \\
N1 & 82  & Act only on orders with status \texttt{pending} or \texttt{delivered}                              & P (T)   \\
N2 & 84  & Exchange / modify-items tools callable only once per order                                         & P (T)   \\
N3 & 84  & Collect \emph{all} items to change into one list before making the call                            & P (D)   \\
\midrule
\multicolumn{4}{@{}l@{}}{\textit{Cancel pending order}} \\
C1 & 88  & Order status must be \texttt{pending}; check it before taking the action                           & P (T)   \\
C2 & 90  & User confirms order id + reason $\in$ \{`no longer needed', `ordered by mistake'\}; no other reason & P (D)  \\
\midrule
\multicolumn{4}{@{}l@{}}{\textit{Modify pending order}} \\
M1 & 96  & Order status must be \texttt{pending}; check it before taking the action                           & P (T)   \\
M2 & 98  & Only shipping address, payment method, or item options may be modified---nothing else              & P (D)   \\
M3 & 102 & New payment = a single method, different from the original                                          & P (T)   \\
M4 & 104 & If the new payment is a gift card, its balance must cover the total amount                          & P (T)   \\
M5 & 110 & Modify-items is one-shot (order becomes unmodifiable): remind + confirm all items first             & P (D)   \\
M6 & 112 & Each new item must be available                                                                     & P (T)   \\
M7 & 112 & New item = same product, different option (no product-type change)                                  & P (T)   \\
M8 & 114 & User provides a payment method for the price difference                                             & P (D)   \\
M9 & 114 & If that payment is a gift card, its balance must cover the price difference                         & P (T)   \\
\midrule
\multicolumn{4}{@{}l@{}}{\textit{Return delivered order}} \\
R1 & 118 & Order status must be \texttt{delivered}; check it before taking the action                          & P (T)   \\
R2 & 120 & User confirms order id + the list of items to be returned                                           & P (D)   \\
R3 & 122--124 & Refund method provided; must be the original payment method or an existing gift card            & P (T)   \\
\midrule
\multicolumn{4}{@{}l@{}}{\textit{Exchange delivered order}} \\
E1 & 130 & Order status must be \texttt{delivered}; check it before taking the action                          & P (T)   \\
E2 & 130 & Remind + confirm the user has provided \emph{all} items to exchange (one-shot)                      & P (D)   \\
E3 & 132 & Each new item = same product, different option, and available                                       & P (T)   \\
E4 & 134 & Payment for the price difference; if a gift card, balance must cover the difference                 & P (T)   \\
\bottomrule
\end{tabular}
\caption{Hand-classified atomic requirements of the \tautwo{}
\tblretaildomain{} policy
document (28 requirements, $0$\,A / $27$\,P / $1$\,W; subtypes D-only $13$, T-only
$14$, D+T $1$). \textit{Line} refers to \texttt{retail/policy.md} as released
with \tautwo{}; \textit{Type} A = argument-level, P = process-level (flat),
W = workflow-level (order-bound; Appendix~\ref{app:policy:method}), with \textbf{D}
= dialogue-dependent, \textbf{T} = requires a prior read-only tool call.}
\label{tab:cat-retail}
\end{table*}

\begin{table*}[!t]
\centering
\small
\setlength{\tabcolsep}{5pt}
\begin{tabular}{@{}l l p{10.6cm} l@{}}
\toprule
\textbf{ID} & \textbf{Line} & \textbf{Requirement (paraphrased)} & \textbf{Type} \\
\midrule
\midrule
\multicolumn{4}{@{}l@{}}{\textit{Global rules}} \\
G1 & 7   & No fabricated information/knowledge/procedures; no subjective recommendations                     & P (D)   \\
G2 & 9   & At most one tool call per turn (not paired with a user-facing reply)                              & P (D)   \\
G3 & 11  & Deny user requests that are against the policy                                                    & P (D)   \\
G4 & 13  & Transfer iff unhandleable: call \texttt{transfer\_to\_human\_agents}, then the literal handoff message & W (D) \\
G5 & 15  & Try your best to resolve the issue before transferring                                            & W (D)   \\
\midrule
\multicolumn{4}{@{}l@{}}{\textit{Customer lookup}} \\
L1 & 94--97 & Identify the customer via phone number, customer ID, or full name + date of birth              & P (D+T) \\
L2 & 99  & For name lookup, date of birth is required for verification                                       & P (D)   \\
\midrule
\multicolumn{4}{@{}l@{}}{\textit{Overdue bill payment (ordered procedure)}} \\
O1 & 105, 117 & \circled{1} Check the bill status \emph{is} \texttt{Overdue} before acting (the API does not check it) & P (T) \\
O2 & 106 & \circled{2} Check the bill amount due                                                             & P (T)   \\
O3 & 107--108 & \circled{3} Send the payment request ($\to$ \texttt{AWAITING PAYMENT}); gated on O1          & P (T)   \\
O4 & 109--110 & \circled{4} Inform the user to check their payment requests                                  & W (D)   \\
O5 & 111 & \circled{5} Only after the user accepts, call \texttt{make\_payment}                              & W (D+T) \\
O6 & 113 & \circled{6} \emph{Always verify} the bill became \texttt{PAID} before telling the user            & W (T)   \\
O7 & 116 & At most one bill in \texttt{AWAITING PAYMENT} at a time                                           & P (T)   \\
\midrule
\multicolumn{4}{@{}l@{}}{\textit{Line suspension}} \\
S1 & 125 & Lift a suspension only after all overdue bills are paid                                           & W (T)   \\
S2 & 126 & Do not lift if the contract end date is past---even if all bills are paid                         & P (T)   \\
S3 & 128 & After resuming, instruct the user to reboot the device                                            & W (D)   \\
\midrule
\multicolumn{4}{@{}l@{}}{\textit{Data refueling (ordered procedure)}} \\
F1 & 134 & Refuel amount $\le 2$\,GB                                                                          & \textbf{A} \\
F2 & 136 & \circled{1} Ask how much data the user wants to refuel                                            & P (D)   \\
F3 & 137 & \circled{2} Confirm the price                                                                     & P (D)   \\
F4 & 138 & \circled{3} Apply the refuel to the line associated with the user's phone number                  & P (D+T) \\
\midrule
\multicolumn{4}{@{}l@{}}{\textit{Change plan (ordered procedure)}} \\
P1 & 144 & \circled{1} Establish which line the plan change is for                                           & P (D)   \\
P2 & 145 & \circled{2} Gather the available plans                                                            & P (T)   \\
P3 & 146 & \circled{3} Ask the user to select one                                                            & P (D)   \\
P4 & 147 & \circled{4} Calculate the price of the new plan                                                   & P (T)   \\
P5 & 148 & \circled{5} Confirm the price                                                                     & P (D)   \\
P6 & 149 & \circled{6} Apply the plan to the line associated with the user's phone number                    & P (D+T) \\
\midrule
\multicolumn{4}{@{}l@{}}{\textit{Data roaming}} \\
RM1 & 155 & If the user is travelling abroad, check whether the line is roaming-enabled                      & P (T)   \\
RM2 & 155 & If not enabled, enable it at no cost                                                              & P (T)   \\
\bottomrule
\end{tabular}
\caption{Hand-classified atomic requirements of the \tautwo{}
\tbltelecomdomain{}
\texttt{main\_policy.md} (29 requirements, $1$\,A / $21$\,P / $7$\,W). \textit{Line}
refers to the document as released with \tautwo{}; \circled{$n$} marks a step
in an ordered procedure (``To do so you need to follow these steps''). Types as
in Table~\ref{tab:cat-retail}.}
\label{tab:cat-telecom-main}
\end{table*}

\begin{table*}[!t]
\centering
\small
\setlength{\tabcolsep}{5pt}
\begin{tabular}{@{}l l p{10.2cm} l@{}}
\toprule
\textbf{ID} & \textbf{Line} & \textbf{Requirement (diagnose $\to$ conditional fix $\to$ verify)} & \textbf{Type} \\
\midrule
\midrule
\multicolumn{4}{@{}l@{}}{\textit{Cellular service (ll.\,55--99)}} \\
TSS1 & 69--72 & Diagnose service via \texttt{check\_status\_bar}                                             & P (T)   \\
TSS2 & 74--78 & If Airplane Mode ON $\to$ guide \texttt{toggle\_airplane\_mode} OFF                          & W (D+T) \\
TSS3 & 79--87 & Check SIM: \texttt{Missing} $\to$ reseat; \texttt{Locked} $\to$ escalate; \texttt{Active} $\to$ ok (three-way branch) & W (D+T) \\
TSS4 & 88--92 & If APN incorrect $\to$ guide \texttt{reset\_apn\_settings}, \emph{then} \texttt{reboot\_device} & W (D+T) \\
TSS5 & 93--99 & If line suspended $\to$ handle per main policy, then verify service restored                 & W (T)   \\
\midrule
\multicolumn{4}{@{}l@{}}{\textit{Mobile data (ll.\,100--163)}} \\
TSD0 & 106--108 & \emph{Prerequisite}: the user must first have cellular service                            & W (T)   \\
TSD1 & 122--127 & Diagnose via \texttt{run\_speed\_test}                                                    & W (T)   \\
TSD2 & 129--131 & Airplane Mode (as in the Service chapter)                                                 & W (D+T) \\
TSD3 & 132--135 & If mobile data disabled $\to$ guide \texttt{toggle\_data} ON                              & W (D+T) \\
TSD4 & 136--141 & If roaming abroad \& data off $\to$ guide \texttt{toggle\_roaming} + verify the line is roaming-enabled & W (D+T) \\
TSD5 & 142--145 & If Data Saver ON $\to$ guide \texttt{toggle\_data\_saver\_mode} OFF                       & W (D+T) \\
TSD6 & 146--150 & If VPN ON \& performance poor $\to$ guide \texttt{disconnect\_vpn}                        & W (D+T) \\
TSD7 & 151--158 & If usage exceeds the plan limit $\to$ offer change-plan or refuel                         & W (T)   \\
TSD8 & 159--163 & If network mode 2G/3G $\to$ guide \texttt{set\_network\_mode\_preference}                 & W (D+T) \\
\midrule
\multicolumn{4}{@{}l@{}}{\textit{MMS (ll.\,164--205)}} \\
TSM0 & 170--173 & \emph{Prerequisite}: the user must have cellular service \emph{and} mobile data           & W (T)   \\
TSM1 & 181--183 & Diagnose via \texttt{can\_send\_mms}                                                      & W (T)   \\
TSM2 & 185--188 & Ensure basic service + data connectivity first                                            & W (T)   \\
TSM3 & 189--193 & If on 2G $\to$ guide \texttt{set\_network\_mode\_preference} to 3G+                       & W (D+T) \\
TSM4 & 194--199 & If MMSC URL unset $\to$ guide \texttt{reset\_apn\_settings}, \emph{then} \texttt{reboot\_device} & W (D+T) \\
TSM5 & 200--203 & If Wi-Fi Calling ON $\to$ guide \texttt{toggle\_wifi\_calling} OFF                        & W (D+T) \\
TSM6 & 204--205 & If the messaging app lacks storage/SMS permissions $\to$ guide \texttt{grant\_app\_permission} & W (D+T) \\
\bottomrule
\end{tabular}
\caption{Hand-classified atomic requirements of the \tautwo{}
\tbltelecomdomain{}
\texttt{tech\_support\_manual.md} (21 requirements, $1$\,P / $20$\,W). \textit{Line}
refers to the document as released with \tautwo{}. Every rule is a
diagnostic-gated (T) user-guidance (D) step; the three chapters form the
prerequisite chain Service $\subset$ Data $\subset$ MMS; every row except
TSS1 (the entry diagnostic) is workflow-level.}
\label{tab:cat-telecom-manual}
\end{table*}

\subsection{Airline catalog (summary)}
\label{app:policy:airline}

The airline classification is inherited from \pgd{}~\citep{kang2026policyguard}
(full catalog there); we add the workflow-level split.  The 167-line policy
yields 43 requirements, $14$\,A / $27$\,P / $2$\,W.  Argument-level mass sits
in booking and modification schema rules (cabin uniformity, passenger limits,
payment-method counts); process-level mass splits between dialogue
obligations (explicit confirmation, the insurance offer) and tool-read
eligibility gates (baggage allowances, flown-segment checks, the disjunctive
cancellation and compensation conditions).  The two W rows are the transfer
pair and the delayed-flight certificate mandated after a change or
cancellation---otherwise airline is flat gates discharged at the mutation.

\subsection{Retail catalog}
\label{app:policy:retail}

The 28 retail requirements (Table~\ref{tab:cat-retail}) partition as $0$\,A /
$27$\,P / $1$\,W (13 D, 14 T, 1 D+T).  Every candidate A-row ranges over
environment state rather than argument values: the status gates read the
order-status enum only \texttt{get\_order\_details} surfaces, the balance and
variant constraints compare against profile and catalog reads, and identity
must be re-derived even when the user supplies an id.  The contrast with
airline is mechanical: airline's \texttt{book\_reservation} passes the whole
reservation as call arguments, so schema constraints are argument-checkable,
whereas retail and telecom mutations are thin id-referencing calls whose
constrained values are surfaced only by read calls.

Structurally, however, retail is the flattest domain: each mutation is
guarded by an order-free conjunction---status $\wedge$ content $\wedge$
payment $\wedge$ confirmation---with prerequisite depth 1, no branching, and
no verify-after-act; its only W row is the transfer pair.  This is exactly
the regime a conversation-aware pass/block verifier already covers, and why
retail is where \pg{}'s graph adds the least.

\subsection{Telecom catalog}
\label{app:policy:telecom}

The 29 account-policy requirements (Table~\ref{tab:cat-telecom-main})
partition as $1$\,A / $21$\,P / $7$\,W; the lone A row is the refuel-amount
bound.  The W rows concentrate in the overdue-payment sequence: send the
payment request only for a confirmed-\texttt{Overdue} bill, call
\texttt{make\_payment} only after the user accepts the request it created,
and \emph{always} verify the bill became \texttt{PAID}---prerequisite
chaining that ends in a verify-after-act obligation a pre-execution verifier
structurally cannot enforce.  Suspension adds cross-procedure dependence
(lift only after the overdue bills are paid, unless the contract has ended)
and a post-act duty only the user can perform (reboot the device).

\label{app:policy:manual}
The 205-line technical-support manual is a troubleshooting manual rather
than a rulebook: 21 requirements (Table~\ref{tab:cat-telecom-manual}),
$1$\,P / $20$\,W.  Its defining property is \emph{dual control}: fixes
execute on the user's device, so the agent must instruct the user, await
their report, and re-verify.  Every rule instantiates
diagnose~$\to$~conditional fix~$\to$~re-verify, and the chapters impose the
prerequisite chain Service~$\subset$~Data~$\subset$~MMS---a literal decision
tree with mandated traversal order, on which a task may contain \emph{no
agent-side mutating call to intercept} at all.

\subsection{Policy structure and observed gains}
\label{app:policy:axis2}

Every domain is majority process-level, consistent with the benefit of
context-aware guidance over naive acting. Yet retail---the most process-level
domain---gains \emph{least}, indicating that the P{+}W fraction alone does not
explain the cross-domain variation. Workflow-level requirements are much more
concentrated in telecom: $2/43$ on airline and $1/28$ on retail versus $27/50$
on telecom, including $20/21$ in the manual alone
(Table~\ref{tab:policy}). The rule is applied uniformly: telecom's refuel and
change-plan recipes are
elicit--confirm--apply chains and remain P, the same shallow shape as
retail; its W mass lives in the overdue-payment state machine, the
suspension procedure, and above all the diagnostic manual.  \pg{} tracks
position in a workflow graph across turns, so it has the most to exploit
exactly where W requirements concentrate. Consistent with this account, the
compiled-graph gain over the matched raw-policy guide is larger on telecom
($+0.325$) than on airline ($+0.100$) or retail ($+0.150$;
\S\ref{sec:ablation}). Thus process-level
requirements help explain the general value of context-aware guidance, while
the concentration of workflow-level requirements helps explain why \pg{}'s
explicit graph and state tracking are especially useful in telecom.

\section{Reliability and significance}
\label{app:sig}

\paragraph{Pass$^k$ breakdown.}
Table~\ref{tab:passk-breakdown} gives the exact values underlying
Figure~\ref{fig:main}. Table~\ref{tab:trial-variance} reports Pass$^1$ separately
for each trial. These tables use the same base splits as Table~\ref{tab:main}:
Airline base-50 and Retail/Telecom base-114.

\begin{table}[t]
\centering
\scriptsize
\setlength{\tabcolsep}{2.5pt}
\resizebox{\columnwidth}{!}{%
\begin{tabular}{llccccc}
\toprule
Domain & System & $\mathrm{P}^{1}$ & $\mathrm{P}^{2}$ & $\mathrm{P}^{3}$ & $\mathrm{P}^{4}$ & $\mathrm{P}^{4}/\mathrm{P}^{1}$ \\
\midrule
\multirow{4}{*}{\tblairlinedomain}
 & \direct{} & 0.640 & 0.530 & 0.485 & 0.460 & 0.72 \\
 & \tg{}     & 0.575 & 0.553 & 0.535 & 0.520 & 0.90 \\
 & \pgd{}    & 0.710 & 0.630 & 0.595 & 0.580 & 0.82 \\
\rowcolor{ourrowbg}
\cellcolor{white} & \textbf{\pg{}} & \textbf{0.775} & \textbf{0.707} & \textbf{0.660} & \textbf{0.620} & 0.80 \\
\midrule
\multirow{3}{*}{\tblretaildomain}
 & \direct{} & 0.800 & 0.700 & 0.638 & 0.596 & 0.75 \\
 & \pgd{}    & 0.645 & 0.506 & 0.421 & 0.360 & 0.56 \\
\rowcolor{ourrowbg}
\cellcolor{white} & \textbf{\pg{}} & \textbf{0.809} & \textbf{0.715} & \textbf{0.654} & \textbf{0.614} & 0.76 \\
\midrule
\multirow{3}{*}{\tbltelecomdomain}
 & \direct{} & 0.384 & 0.273 & 0.226 & 0.193 & 0.50 \\
 & \pgd{}    & 0.406 & 0.292 & 0.237 & 0.202 & 0.50 \\
\rowcolor{ourrowbg}
\cellcolor{white} & \textbf{\pg{}} & \textbf{0.866} & \textbf{0.763} & \textbf{0.682} & \textbf{0.614} & 0.71 \\
\bottomrule
\end{tabular}%
}
\caption{Pass$^k$ breakdown for the base-split results in
Table~\ref{tab:main} and Figure~\ref{fig:main} (\gptfive{}, $n{=}4$).
$\mathrm{P}^{4}/\mathrm{P}^{1}$ is the consistency ratio.}
\label{tab:passk-breakdown}
\end{table}

\begin{table}[t]
\centering
\scriptsize
\setlength{\tabcolsep}{2.5pt}
\begin{tabular}{llccccc}
\toprule
Domain & System & T1 & T2 & T3 & T4 & pstd \\
\midrule
\multirow{4}{*}{\tblairlinedomain}
 & \direct{} & 0.620 & 0.620 & 0.640 & 0.680 & 0.024 \\
 & \tg{}     & 0.560 & 0.580 & 0.580 & 0.580 & 0.009 \\
 & \pgd{}    & 0.700 & 0.740 & 0.700 & 0.700 & 0.017 \\
\rowcolor{ourrowbg}
\cellcolor{white} & \textbf{\pg{}} & 0.800 & 0.720 & 0.820 & 0.760 & 0.038 \\
\midrule
\multirow{3}{*}{\tblretaildomain}
 & \direct{} & 0.807 & 0.746 & 0.842 & 0.807 & 0.035 \\
 & \pgd{}    & 0.649 & 0.623 & 0.632 & 0.675 & 0.020 \\
\rowcolor{ourrowbg}
\cellcolor{white} & \textbf{\pg{}} & 0.816 & 0.798 & 0.789 & 0.833 & 0.017 \\
\midrule
\multirow{3}{*}{\tbltelecomdomain}
 & \direct{} & 0.342 & 0.377 & 0.404 & 0.412 & 0.027 \\
 & \pgd{}    & 0.465 & 0.386 & 0.360 & 0.412 & 0.039 \\
\rowcolor{ourrowbg}
\cellcolor{white} & \textbf{\pg{}} & 0.860 & 0.860 & 0.842 & 0.904 & 0.023 \\
\bottomrule
\end{tabular}
\caption{Pass$^1$ in each of the four trials on the base splits. pstd is the
population standard deviation across trial-level values.}
\label{tab:trial-variance}
\end{table}

\paragraph{Paired significance.}
For each task, Pass$^4$ is one iff all four trials succeed. We compare systems
on common tasks and obtain $95\%$ confidence intervals by paired bootstrap
(10{,}000 task-level resamples). Across domain strata, we use
\[
Z=\frac{\sum_d a_d-\sum_d b_d}
        {\sqrt{\sum_d a_d+\sum_d b_d}},
\]
where $a_d$ counts \pg{}-only Pass$^4$ successes and $b_d$ the reverse.
Table~\ref{tab:sig-pooled} shows that \pg{} improves pooled Pass$^4$ over
\direct{} ($p<10^{-8}$) and \pgd{} ($p<10^{-12}$), pooling all three domains.
Domain-level effect sizes and intervals appear in Table~\ref{tab:sig-ci}.
Because $\Delta\mathrm{P}^{4}$ is a signed difference rather than a probability,
negative limits are valid; an interval crossing zero indicates that the
domain-level difference is not statistically distinguishable from zero.

\begin{table}[t]
\centering
\scriptsize
\setlength{\tabcolsep}{2.5pt}
\begin{tabular}{lrrrrrr}
\toprule
Opponent & $D$ & $\sum a$ & $\sum b$ & $n_{\rm disc}$ & $Z$ & $p$ \\
\midrule
\direct{} & 3 & 77 & 19 & 96  & $+5.92$ & $<10^{-8}$ \\
\pgd{}    & 3 & 97 & 19 & 116 & $+7.24$ & $<10^{-12}$ \\
\bottomrule
\end{tabular}
\caption{Pooled stratified McNemar tests on per-task Pass$^4$. $D$ is the
number of domain strata; $a$ counts \pg{}-only passes and $b$ the reverse,
summed across strata.}
\label{tab:sig-pooled}
\end{table}

\begin{table}[t]
\centering
\scriptsize
\setlength{\tabcolsep}{3pt}
\begin{tabular}{llrc}
\toprule
Domain & Opponent & $n$ & \textbf{$\Delta\mathrm{P}^{4}$ [95\% CI]} \\
\midrule
\multirow{3}{*}{\tblairlinedomain}
 & \direct{} & 50  & $+0.160$ [$+0.020$, $+0.300$] \\
 & \tg{}     & 50  & $+0.100$ [$-0.040$, $+0.260$] \\
 & \pgd{}    & 50  & $+0.040$ [$-0.080$, $+0.160$] \\
\midrule
\multirow{2}{*}{\tblretaildomain}
 & \direct{} & 114 & $+0.018$ [$-0.070$, $+0.105$] \\
 & \pgd{}    & 114 & $+0.254$ [$+0.149$, $+0.360$] \\
\midrule
\multirow{2}{*}{\tbltelecomdomain}
 & \direct{} & 114 & $+0.421$ [$+0.316$, $+0.526$] \\
 & \pgd{}    & 114 & $+0.412$ [$+0.298$, $+0.526$] \\
\bottomrule
\end{tabular}
\caption{Per-domain paired-bootstrap differences in Pass$^4$ on the base
splits (10{,}000 task-level resamples). Positive values favor \pg{}.}
\label{tab:sig-ci}
\end{table}

\section{Cost analysis}
\label{app:cost}

\pg{} adds verifier inference to the underlying agent. We therefore report
guide-side model usage separately from the actor and user simulator, and examine
how prompt caching and the firing policy limit this overhead.

\subsection{Guide-side usage}
\label{app:cost:measured}

Table~\ref{tab:cost} summarizes the \gptfive{} configuration. The guide costs
\$0.34--\$0.56 per task and fires 7.4--11.5 times per task; Telecom is higher
because its diagnostic workflows require longer interactions. Although
85.8--88.1\% of prompt tokens are cached, each call generates about 2.2--2.5k
output tokens for workflow traversal, evidence checks, and remediation. Output
generation consequently accounts for an estimated 67.5--71.0\% of guide spend.

\begin{table}[h]
\centering
\scriptsize
\setlength{\tabcolsep}{2pt}
\begin{tabular}{lrrrrrr}
\toprule
Domain & \makecell{Calls/\\task} & \makecell{Prompt\\tok./call} &
\makecell{Cached\\input} & \makecell{Output\\tok./call} &
\makecell{Guide\\total \$} & \makecell{Guide\\\$/task} \\
\midrule
\tblairlinedomain{} & 7.56  & 32,360 & 88.1\% & 2,478 & 20.10 & 0.40 \\
\tblretaildomain{}  & 7.42  & 22,803 & 85.8\% & 2,179 & 13.67 & 0.34 \\
\tbltelecomdomain{} & 11.47 & 28,518 & 86.5\% & 2,186 & 22.29 & 0.56 \\
\bottomrule
\end{tabular}
\caption{Guide-side model usage for the \gptfive{} configuration (50 Airline
and 40 Retail/Telecom tasks). Costs exclude the actor and user simulator.}
\label{tab:cost}
\end{table}

Thus, caching substantially reduces repeated input processing, but does not
eliminate the marginal cost of the verifier: its structured audit is much longer
than a binary policy verdict. Reducing guide output length is therefore the main
remaining cost-optimization opportunity.

\subsection{Wall-clock time}
\label{app:cost:latency}

Table~\ref{tab:latency} reports mean end-to-end task time. \pg{} requires
$5.45$--$5.78\times$ the observed wall-clock time of \direct{}, reflecting the
additional verifier generations at successive turns.

\begin{table}[h]
\centering
\small
\setlength{\tabcolsep}{4pt}
\begin{tabular}{lrrr}
\toprule
Domain & \makecell{\direct{}\\(s/task)} & \makecell{\pg{}\\(s/task)} & Ratio \\
\midrule
\tblairlinedomain{} & 36.4 & 210.1 & $5.78\times$ \\
\tblretaildomain{}  & 34.6 & 193.6 & $5.60\times$ \\
\tbltelecomdomain{} & 45.5 & 247.6 & $5.45\times$ \\
\bottomrule
\end{tabular}
\caption{Mean end-to-end wall-clock time per task.}
\label{tab:latency}
\end{table}

The measurement covers the complete simulated conversation, including actor,
verifier, user-simulator, and tool execution, rather than isolated verifier
latency.

\subsection{Cost-aware execution}
\label{app:cost:cache}

The verifier prompt places the policy, workflow graph, tool specifications,
judging rules, and output contract in a byte-stable prefix. Only the evolving
conversation and latest request state vary across calls, allowing the repeated
enforcement context to benefit from prefix caching.

Each firing uses one verifier generation for all open requests and may advance
across several satisfied workflow nodes. The verifier fires before responses to
user turns, while intervening tool results are incorporated at the next firing;
an intercepted unauthorized action triggers an additional check. Consequently,
the number of guide calls scales with relevant agent turns rather than with
individual workflow nodes or tool observations.

\section{Workflow verification}
\label{app:workflow-validation}

After generation, the authors manually verified each frozen workflow against the
source policy and tool specifications. We reviewed the represented request
types, the ordering of policy prerequisites, the authorization and subsequent
verification of mutating actions, and the policy or tool-contract basis of graph
constraints. This was a verification step: we did not manually edit the
generated workflows used in the experiments.

Programmatic validation is also integrated into workflow generation. Each
generated file is schema-validated, and the assembled graph is checked for
subflow composition, valid tool references and decision branches, reachability,
and mutating-action authorization coverage. The resulting findings are supplied
to the pipeline's automated review stage. We reran these checks on the exact
frozen workflows; Table~\ref{tab:workflow-validation} reports the results.

\begin{table}[h]
\centering
\small
\setlength{\tabcolsep}{4pt}
\begin{tabular}{lrrr}
\toprule
Domain & Nodes & Auth. nodes & Validator flags \\
\midrule
\tblairlinedomain{} & 158 & 11 & 0 \\
\tblretaildomain{}  & 104 &  7 & 0 \\
\tbltelecomdomain{} & 127 &  5 & 1 \\
\bottomrule
\end{tabular}
\caption{Programmatic validation rerun on the frozen workflow graphs.}
\label{tab:workflow-validation}
\end{table}

The Telecom flag concerns \texttt{disable\_roaming}, which is exposed by the
environment but has no authorizing workflow path. Manual review confirmed that
the source policy specifies enabling roaming but does not authorize the agent to
disable it. Its absence therefore does not omit a source-policy procedure; the
workflow correctly leaves this action outside its authorized policy scope.

\section{Call-level near-miss audit}
\label{app:nearmiss}

\begin{table}[h]
\centering
\small
\setlength{\tabcolsep}{4pt}
\begin{tabular}{lccc}
\toprule
Call-NMR (\%; $\downarrow$) & Airline & Retail & Telecom$^\dagger$ \\
\midrule
\direct{} & 25.4 & 47.6 & 0.0 \\
\pgd{}    & 32.5 & 34.8 & 0.0 \\
\rowcolor{ourrowbg}
\textbf{\pg{}} & \textbf{15.6} & \textbf{34.7} & 0.0 \\
\bottomrule
\end{tabular}
\caption{Call-NMR on passing Mut trajectories ($n{=}4$): percentage of
successfully executed agent mutations missing a frozen guard-derived read
prerequisite. $^\dagger$Telecom is an adapted, agent-side diagnostic whose
read oracle saturates; its zeros do not establish equal procedural quality.}
\label{tab:nearmiss}
\end{table}

A \emph{near miss}~\citep{rabinovich2026nearmiss} is a mutation in an
outcome-passing task that lacks a policy prerequisite. Following the
runtime-view convention used by \pgd{},
Call-NMR~\citep{rabinovich2026nearmiss,kang2026policyguard} is the fraction of
successfully executed mutating calls in passing Mut trajectories that lack at
least one earlier read required by a frozen ToolGuard guard. Blocked attempts,
tool errors, and calls without a successful response are excluded. The same
domain oracle is applied to every system.

On Airline, \pg{} has the lowest observed rate (15.6\%, versus 25.4\% for
\direct{} and 32.5\% for \pgd{}). On Retail, \pg{} and \pgd{} are effectively
tied (34.7\% and 34.8\%); \pg{} nevertheless supports more outcome-passing Mut
trajectories (116 versus 80). Thus Call-NMR audits prior-read coverage
conditional on success, not task coverage or complete procedural validity.

\paragraph{Telecom adaptation.}
The original guard-derived audit is not defined for Telecom. We adapt its
call-level convention using a frozen \gptfive{} guard tree generated from the
tagged concatenation of Telecom's raw agent policy and technical-support
manual. Argument- and response-aware matching requires reads to resolve the
same customer, line, or bill as the mutation. Because Telecom is dual-control,
the denominator covers only agent-side carrier mutations; user/device actions
such as toggling data, resetting APN settings, and rebooting are outside the
agent-call oracle.

The adapted read oracle yields 0.0\% for all three primary systems
(Table~\ref{tab:nearmiss}). This is a ceiling effect, not evidence that they are
procedurally equivalent: the oracle cannot express conversational evidence
such as travel status, selected refuel amount, and price confirmation, nor
ordering among user/device actions. This non-identification motivates our
workflow-level expansion of prerequisite analysis in
Section~\ref{sec:trace-compliance}.

\section{Prompts}
\label{app:prompts}

\definecolor{cardteal}{HTML}{1F8C9C}
\definecolor{cardslate}{HTML}{5B6470}
\lstdefinestyle{promptbody}{basicstyle=\ttfamily\scriptsize, frame=none,
  xleftmargin=0pt, xrightmargin=0pt, aboveskip=1pt, belowskip=1pt}
\newtcolorbox{guidecard}[2][]{enhanced, colback=cardteal!3!white,
  colframe=cardteal!45!black!35, boxrule=0.5pt, arc=1.6mm,
  left=5pt, right=5pt, top=7pt, bottom=4pt, breakable=false,
  title={#2}, fonttitle=\sffamily\bfseries\scriptsize, coltitle=white,
  colbacktitle=cardteal!85!black,
  attach boxed title to top left={yshift=-2.4mm, xshift=3mm},
  boxed title style={arc=0.8mm, boxrule=0pt, left=5pt, right=5pt, top=1.6pt, bottom=1.6pt},
  #1}
\newtcolorbox{gencard}[2][]{enhanced, colback=cardslate!4!white,
  colframe=cardslate!55, boxrule=0.5pt, arc=1.6mm,
  left=5pt, right=5pt, top=7pt, bottom=4pt, breakable=false,
  title={#2}, fonttitle=\sffamily\bfseries\scriptsize, coltitle=white,
  colbacktitle=cardslate,
  attach boxed title to top left={yshift=-2.4mm, xshift=3mm},
  boxed title style={arc=0.8mm, boxrule=0pt, left=5pt, right=5pt, top=1.6pt, bottom=1.6pt},
  #1}

This appendix reproduces the load-bearing prompts of both halves of the system
as prompt-card figures: the runtime guide prompt
(Figures~\ref{fig:prompt-protocol}--\ref{fig:prompt-rules2}, teal cards) and the
workflow-generation pipeline prompts
(Figures~\ref{fig:prompt-essentials}--\ref{fig:prompt-planreview}, slate cards).
Unicode punctuation is transliterated to ASCII, and the verifier's next-step
instruction is named \emph{remediation} consistently; otherwise the text is
verbatim.
In these prompt excerpts, ``turn'' names a verifier invocation and the hard-gate
language states the template's authorization contract.  The reported advisory
configuration invokes the verifier at user-turn boundaries, intercepts the first
unauthorized mutating call after a user message, and then permits an immediate
retry after corrective guidance (\S\ref{sec:guide}).

The guide's system message is assembled once per domain by substituting three
placeholders---\texttt{\{policy\_doc\}} (the raw policy), \texttt{\{graph\_doc\}}
(the composed graph rendered as a topology section plus per-node specs), and
\texttt{\{tools\_doc\}} (the mutating/read-only tool partition plus the domain's closed
value vocabularies)---into a fixed template, with the per-turn task instruction
appended at the end so the whole thing is one cached static prefix
(Appendix~\ref{app:cost:cache}). Figure~\ref{fig:prompt-protocol} shows the
protocol, per-turn trigger, and output contract;
Figures~\ref{fig:prompt-rules1} and~\ref{fig:prompt-rules2} the judging rules
referenced by the main text's verifier description (\S\ref{sec:guide}).
The generation pipeline's shared system prompt carries the schema contract (the
eight node types with required fields, edge semantics, id rules) plus the
authoring essentials of Figure~\ref{fig:prompt-essentials}; the plan and
adversarial-review stage prompts are in Figure~\ref{fig:prompt-planreview}. The
remaining stages (plan review, per-subflow generation, subflow path review, main
wiring) restate subsets of the same contract scoped to their output file.

\section{Example workflow graphs}
\label{app:workflow-example}

\tikzset{
  wfex/.style={font=\scriptsize, every node/.append style={align=center}},
  wfact/.style={draw=black!60, rounded corners=1pt, fill=blue!6, minimum height=7mm, inner sep=2pt, text width=18mm, align=center},
  wfusr/.style={wfact, fill=yellow!18},
  wftool/.style={wfact, fill=green!12},
  wfauth/.style={draw=black!60, trapezium, trapezium left angle=70, trapezium right angle=110, fill=red!12, minimum height=7mm, inner sep=1pt, text width=19mm, align=center},
  wfdec/.style={draw=black!60, diamond, aspect=2.0, fill=violet!10, inner sep=0.5pt, text width=11mm, align=center},
  wfterm/.style={draw=black!60, rounded corners=3mm, fill=gray!15, minimum height=6mm, inner sep=2pt, text width=16mm, align=center},
  wfsub/.style={draw=black!50, dashed, rounded corners=1pt, fill=orange!8, minimum height=7mm, inner sep=2pt, text width=19mm, align=center},
  wfghost/.style={draw=black!40, densely dotted, rounded corners=1pt, fill=white, minimum height=7mm, inner sep=2pt, text width=20mm, text=black!60, align=center},
  wfe/.style={-{Stealth[length=1.8mm]}, line width=0.5pt, draw=black!70},
  wfbr/.style={wfe, dashed},
  wflab/.style={font=\tiny, text=black!55, inner sep=1pt},
}

This appendix visualizes one generated-schema workflow per domain, composed flat
as the guide's cached prefix renders it. Subflow composition prefixes node ids
(\texttt{identify\_user.load\_profile}, \texttt{book\_flow.authorize\_book}), so
the guide addresses every node of every inlined subflow by a stable path-like
id. The three figures illustrate the intended schema: every depicted solid edge
carries the same label (\texttt{when:\,satisfied}), depicted mutating tools use
a \texttt{tool\_authorization} node (trapezoid) with stated prerequisites
upstream, and an authorize$\to$verify pair represents the post-call success
check. These visualizations do not establish complete mutating-tool coverage for
every frozen artifact; Appendix~\ref{app:workflow-validation} reports the
manual verification and programmatic checks.
Node colors match the main-text node vocabulary: \ntentry{}, \ntagent{},
\ntuser{}, \nttool{}, \ntauth{}, \ntdecision{}, and \ntsubflow{}.

\paragraph{Airline (Figure~\ref{fig:workflow-example}).} The shared main spine
(intake $\to$ identify $\to$ classify) and the full transactional
\texttt{book\_reservation} path: trip and passenger collection, a mandatory
search whose \texttt{TOOL\_RESULT} grounds the flight arguments, the insurance
disclosure, and the shared summary-and-confirm exchange, all upstream of the
gate. The classifier routes each reconciled request into its request-type subflow;
\texttt{general} and \texttt{transfer} are terminal branches, not subflows.

\paragraph{Retail (Figure~\ref{fig:workflow-retail}).} The main spine performs
intake and shared identification before the classifier dispatches each request
to cancellation, modification, return, exchange, account, or terminal handling.
The expanded \texttt{cancel\_pending\_order} branch is the canonical \emph{flat
gate conjunction} of Appendix~\ref{app:policy:retail}: locate the order, verify
its status is exactly \texttt{pending}, obtain the closed-set cancellation
reason, confirm, act, and verify.

\paragraph{Telecom (Figure~\ref{fig:workflow-telecom}).} After intake and shared
customer identification, the classifier dispatches requests to billing, line,
plan, roaming, and troubleshooting subflows. The expanded overdue-bill branch
invokes \texttt{pay\_overdue\_bill}, whose eligibility gates precede
confirmation and authorization of \texttt{send\_payment\_request}. The expanded
MMS branch invokes \texttt{troubleshoot\_mms}, an ordered traversal of service
and data prerequisites
followed by the documented MMS causes and a closing resolve-or-transfer
decision (Appendix~\ref{app:policy:telecom}).

\subsection{Turn-by-turn guide example}
\label{app:turn-example}

For the airline task \emph{``Book me on HAT136 JFK$\rightarrow$LAX, Nov 15,''}
the agent jumps straight to \texttt{book\_reservation}; the guide walks it back
through the policy path, one remediation step per turn. Turn~1 stops at
\texttt{identify\_user.ask\_user\_id} and asks for the user id; turn~2 directs
\texttt{get\_user\_details}; subsequent turns walk through trip collection,
flight search, rule validation, payment, baggage/insurance computation, and
summary-and-confirm. Only after the upstream nodes are satisfied does
\texttt{book\_flow.authorize\_book\_reservation} set
\texttt{authorize\_tool}; the following turn verifies the successful
\texttt{TOOL\_RESULT} and closes the request.

\begin{figure*}[t]
\begin{minipage}[t]{0.492\textwidth}
\vspace{0pt}%
\begin{guidecard}{Guide system prompt \,\textperiodcentered\, protocol}
\begin{lstlisting}[style=promptbody]
You are the policy guide for a customer-service agent. You do NOT talk to the user; you read the policy and the conversation and tell the agent what to do next by tracking where each of the user's requests sits in the workflow graph.

## How this works (read carefully)
We work through a single ongoing chat, one message per agent turn. I (the runtime) keep the authoritative state in code; each turn I append the new conversation and the current tracked state, and you return the COMPLETE updated state plus each blocked request's remediation. You have the entire workflow graph below, so you traverse it yourself in your own reasoning -- there is no per-node back-and-forth. Each turn you do two steps:
1. RECONCILE the open requests (intents): start a NEW request at the graph entry node, keep a CONTINUING request at its recorded node/status/authorization, DROP a request the user abandoned, and MERGE two entries that are the same request into one.
2. TRAVERSE each open intent from its current node: evaluate that node against its expectation and satisfying condition (ground truth = tool results); if satisfied, step to the next node and evaluate again; STOP at the FIRST unsatisfied node -- that becomes the intent's current node and you write its remediation; at a decision node follow the branch that matches the request; authorize a WRITE tool only when its policy prerequisites are all met; mark an intent that reaches a terminal node done.
Do the work as REASONING you write out step by step, and only AFTER the reasoning emit the final state as JSON. Reason first, commit second -- never write the JSON cold. Your final JSON is the memory you are guaranteed to carry forward, so every fact you will need next turn must live in it.
\end{lstlisting}
\end{guidecard}
\begin{guidecard}{Guide system prompt \,\textperiodcentered\, per-turn trigger (cached)}
\begin{lstlisting}[style=promptbody]
Update the state machine for this agent turn. FIRST reason in plain text -- reconcile the open requests, then traverse the graph for each open intent node-by-node, and for every node quote its satisfying criterion, cite the evidence, and decide SATISFIED / NOT SATISFIED (stop at the first unsatisfied node). THEN, after the reasoning, emit the final state as the single fenced json block as the last thing in your message.
\end{lstlisting}
\end{guidecard}
\end{minipage}\hfill
\begin{minipage}[t]{0.492\textwidth}
\vspace{0pt}%
\begin{guidecard}{Guide system prompt \,\textperiodcentered\, output contract (JSON)}
\begin{lstlisting}[style=promptbody]
{
 "reconcile": {
  "reasoning": "<one line: which requests are open now, and what you added / closed / merged this turn>",
  "intents": [{"id": "<short stable id>", "request": "<one-line description with the concrete target>", "intent": "<classifier branch label>"}]
 },
 "traverse": [
  {
   "id": "<intent id>",
   "plan": "<this intent's whole arc in one line: the end outcome or WRITE it drives toward, the sub-steps that get there, and which are already done>",
   "walk": [
    {"node": "<node id>", "reasoning": "<the SATISFIED/NOT-SATISFIED judgement>", "satisfied": true},
    {"node": "<next node id>", "reasoning": "<...>", "satisfied": false}
   ],
   "node": "<the node you stopped at>",
   "status": "open | done",
   "authorize_tool": "<WRITE tool to open at its authorization node, or null>",
   "selection": {
    "target": "<the specific record(s) this intent acts on>",
    "ruled_out": ["<each candidate examined and rejected, with the reason>"],
    "grounded_values": {"<arg>": {"value": "<copied verbatim from the result that established it>", "source": "<which tool result + record, or 'user message'>"}}},
   "remediation": "<the exact next action for the agent if blocked; empty if done>"
  }
 ],
 "transfer": false,
 "summary": "<compact running recap of the working context the structured fields do not already capture>"
}
\end{lstlisting}
\end{guidecard}
\end{minipage}
\caption{Guide system prompt, part 1 of 3. \textbf{Left:} the protocol block
(reconcile-then-traverse, one generation per fired turn) and the per-turn
trigger, which is folded into the cached static prefix rather than re-sent.
\textbf{Right:} the Part-2 output contract---the single fenced JSON block the
runtime parses into code-owned state.}
\label{fig:prompt-protocol}
\end{figure*}

\begin{figure*}[t]
\begin{minipage}[t]{0.492\textwidth}
\vspace{0pt}%
\begin{guidecard}{Judging rules \,\textperiodcentered\, ground truth}
\begin{lstlisting}[style=promptbody]
- Ground truth = tool results. A fact, eligibility condition, or completed action counts as established ONLY when a TOOL_RESULT in the conversation confirms it (directly or by your reasoning over tool results) -- NOT because the user asserted it, told you to assume it, or stated it as a given, and NOT because the agent merely said it; a value the agent computes from already-established inputs is itself established -- but a decision or argument that turns on a numeric or temporal computation must be worked out step by step in your reasoning and taken from those steps, never asserted as a conclusion; the user's own choices, consent, and preferences are established by the user's message, but a factual or eligibility condition the policy gates on is never established by the user's word -- when the user supplies or assumes one, delegate the read-only tool that verifies it and judge the condition from that result before relying on it, and a tool result that contradicts the user's claim governs.
\end{lstlisting}
\end{guidecard}
\end{minipage}\hfill
\begin{minipage}[t]{0.492\textwidth}
\vspace{0pt}%
\begin{guidecard}{Judging rules \,\textperiodcentered\, authorization}
\begin{lstlisting}[style=promptbody]
- Authorization. A WRITE (mutating) tool may be authorized ONLY when every policy prerequisite for that specific tool is met from tool-confirmed facts (plus the user's own consent where the policy asks for it) -- apply exactly the prerequisites the policy states for that tool, adding none it does not state, so once all of them are met you authorize rather than withholding for a condition you inferred. When you authorize, the runtime opens that tool's gate so the agent can call it; until then the runtime hard-blocks the call. Whenever your remediation instructs the agent to call a WRITE tool you have judged its prerequisites met, so set that intent's authorize_tool to that tool the same turn -- never instruct a WRITE call while leaving authorize_tool null. The authorizing remediation must state the exact arguments the agent must pass -- every id and value from grounded results, with any amounts, counts, or derived figures computed from the state the requested changes produce rather than from the prior state, so they reconcile with the tool's requirements -- and cover every change the user requested, so the agent does not guess, miscompute, or omit a step. To change specific fields of an existing record, reuse the record's current values for the fields the user is not changing rather than searching for or re-collecting new ones; and never withhold authorization to first establish a value the write tool itself computes or returns -- such an output is not a prerequisite. When the tool applies to several records, draw each call's arguments from that record's own grounded data and confirm the pairing before authorizing -- a value belonging to one record must never cross into another, since the call cannot be undone.
\end{lstlisting}
\end{guidecard}
\end{minipage}
\caption{Guide system prompt, part 2 of 3: judging rules (i)---evidence
grounding and mutating-tool authorization.}
\label{fig:prompt-rules1}
\end{figure*}

\begin{figure*}[t]
\begin{minipage}[t]{0.492\textwidth}
\vspace{0pt}%
\begin{guidecard}{Judging rules \,\textperiodcentered\, sourcing, transfer, fixes, reads}
\begin{lstlisting}[style=promptbody]
- Source every write argument. Before authorizing a WRITE, record in grounded_values where each argument's value came from -- the tool result and record it was copied from, or 'user message'. A value whose source is the user's word for a field a record owns is not grounded: read it from that record and use the record's value.
- Transfer. Signal transfer only when a request cannot be handled within policy at all (e.g. an action the policy reserves for a human, or the user insisting on a policy-violating action). Transfer ends the whole conversation, so signal it only when no open request can still be advanced within policy; when one request is blocked but others remain handleable, refuse only the blocked one and keep completing the rest rather than transferring. Never transfer an action the policy actually permits. transfer_to_human_agents is that signal, not an ordinary tool to authorize: whenever your remediation instructs the agent to call it you have judged the whole remaining task unhandleable, so set the top-level transfer to true the same turn -- the runtime opens that call only when transfer is true, so a remediation to transfer while transfer stays false contradicts itself and is blocked.
- Carry out the fix, not just name it. When resolving an open request requires a corrective action the graph gates inside another intent's subflow, open that action as an additional active intent and traverse its subflow so its tool can be authorized; the original request is resolved only once every corrective action its situation requires has been carried out, not when the cause is merely identified.
- Delegate READ tools to obtain facts. Establish any detail a READ/lookup tool can supply, or that a prior tool result already holds, from that result -- delegate the lookup or read the loaded data -- rather than asking the user. When advancing a request needs an identifier, record, or argument value the user has not supplied, do not ask for it -- direct the agent to enumerate the candidate records the READ tools return and select every one whose contents match the request's described attributes or the criterion the user stated, deriving each argument from that loaded data; a request that describes its target by attributes rather than by id is satisfied only once every matching record has been handled, not after the first.
\end{lstlisting}
\end{guidecard}
\end{minipage}\hfill
\begin{minipage}[t]{0.492\textwidth}
\vspace{0pt}%
\begin{guidecard}{Judging rules \,\textperiodcentered\, the remediation contract}
\begin{lstlisting}[style=promptbody]
- Remediation. A remediation is the exact next action for the agent: which tool to call with which arguments, or the value the user asked for computed from the tool results and stated back to them, or which detail to ask the user for (ask the user ONLY for things no tool can supply), or that the agent must refuse and the precise policy reason. A request for a value is resolved only once the agent has stated that value, not when a related action is done. When a node is not satisfied, the remediation must name the specific prerequisite that is missing and the concrete action that would obtain or satisfy it, never a generic statement that the conditions are unmet. Keep it concrete and grounded in the policy and the conversation. When the next steps along the intent's path are agent-side and already fully determined -- none needing a reply from the user and none whose arguments depend on a tool result you do not yet have -- write one remediation listing those ordered steps to carry out in a single stretch rather than one step per turn, splitting only at the first step that needs the user or a result you do not yet have. When the user has stated a selection or optimization criterion, first enumerate every candidate in the space the criterion ranges over from the tool results, then identify the single winning option by comparing that criterion across all of them and pinning the winning option's exact arguments -- never select from a partial candidate set, offer an unranked list, or ask the user to choose among candidates the criterion already decides. Every identifier you place in a remediation must appear verbatim in the specific tool result you are selecting it from -- never carry an identifier over from a different record (such as the one already on file) or introduce one not present in that result.
\end{lstlisting}
\end{guidecard}
\end{minipage}
\caption{Guide system prompt, part 3 of 3: judging rules (ii)---argument
sourcing, transfer scoping, corrective-fix traversal, read-tool delegation
(left) and the remediation contract (right).}
\label{fig:prompt-rules2}
\end{figure*}

\begin{figure*}[t]
\begin{minipage}[t]{0.492\textwidth}
\vspace{0pt}%
\begin{gencard}{Generation system prompt \,\textperiodcentered\, runtime contract}
\begin{lstlisting}[style=promptbody]
A proactive verifier reads the WHOLE graph plus the full conversation each turn, tracks where each open request sits, and writes the agent's next-step remediation. Two things the graph must make enforceable:
- Mutating tools are gated. A tool_authorization node is the choke point for one WRITE tool: the agent may call that tool ONLY after the guide authorizes it (its upstream prerequisites met) and the runtime confirms success at the following verify node. So every prerequisite/eligibility/confirmation a mutation requires must sit UPSTREAM of its authorization node.
- Faithfulness. The guide can only enforce what the graph encodes. The graph must reflect the policy and the domain's tool/task properties exactly.
\end{lstlisting}
\end{gencard}
\begin{gencard}{Generation system prompt \,\textperiodcentered\, authoring essentials 1--3}
\begin{lstlisting}[style=promptbody]
The bottom line: the graph must (A) reflect the policy faithfully, (B) reflect the domain's tool/task properties, and (C) be valid + minimal.
1. Cover every WRITE tool. Each mutating tool is reachable through some intent and ends in an authorize_<short> (tool_authorization) -> verify_<tool> (agent_action) pair -- the authorization choke point and the post-call success check (the tool can fail, so the verify node confirms a successful TOOL_RESULT and, on error, directs a correctable retry). Place everything the policy requires before a mutation UPSTREAM of its authorization. A value the mutating tool itself computes or returns is an output, not a prerequisite.
2. Gates are outcome-framed. An eligibility/permissibility gate is SATISFIED only if the permitting condition actually HOLDS -- never "has the agent checked X?" (that flips true the moment the check runs, even when it concludes ineligible). When the condition fails, the gate stays NOT_SATISFIED and its remediation REFUSES. Keep gates as agent_action nodes on the main path -- do not model refusal as a branch to a dead-end exit.
3. Ground facts in tools; quote the policy exactly. A fact/eligibility condition counts only when established by a tool result -- not a user assertion. The user's own consent/preference is established by the user's message. Where the environment gates on an exact status/enum literal, quote it and require exact equality (a qualified variant is a different value), and name whose field. Quote limits/prices/amounts/time-windows verbatim, identically across expectation/evaluation_prompt/remediation_template. Encode only conditions the policy states -- never invent a check no tool/data can establish.
\end{lstlisting}
\end{gencard}
\end{minipage}\hfill
\begin{minipage}[t]{0.492\textwidth}
\vspace{0pt}%
\begin{gencard}{Generation system prompt \,\textperiodcentered\, authoring essentials 4--7}
\begin{lstlisting}[style=promptbody]
4. Classifier completeness. The main classify decision has one branch per intent + general (anything unsupported -> a non-transfer refusal exit) + transfer (must go to a human). Operations the policy forbids outright get NO subflow -- they route to general.
5. Right tool side. tool_call/tool_authorization may name ONLY tools from the AGENT inventory. An action the END USER performs on their own device is an agent_action that instructs the user and judges their reported outcome -- never a tool node (which could never be satisfied).
6. Minimal + faithful. Author the fewest nodes that enforce the policy. Fold a validation/disclosure into the node that collects its data; merge related collection steps rather than one node per policy sentence. Every normative policy statement must be enforced by some node (or be genuinely out of graph scope). When the policy enumerates the possible causes of a problem, the flow that resolves it must check every documented cause; a cause whose remedy is a mutating tool must be its OWN checkpoint that reaches that tool's authorization.
7. Main spine + shared subflows. main: entry -> intake (greet + open question) -> identify (shared identification subflow) -> classify -> intent subflow anchors -> exits ["exit_normal","exit_general","exit_transfer"]. Any procedure shared by 2+ intents may be its own subflow. When the domain has user-owned records that requests target, the identification subflow ends with a tool_call that loads the authenticated user's full account record.
\end{lstlisting}
\end{gencard}
\end{minipage}
\caption{Workflow-generation system prompt: the runtime contract the graph must
satisfy, and the authoring-essentials contract every stage must follow.}
\label{fig:prompt-essentials}
\end{figure*}

\begin{figure*}[t]
\begin{minipage}[t]{0.492\textwidth}
\vspace{0pt}%
\begin{gencard}{Stage 2 \,\textperiodcentered\, plan requirements}
\begin{lstlisting}[style=promptbody]
1. Cover every WRITE tool. Each mutating tool is reachable through at least one intent subflow ending in its authorize -> verify pair.
2. Intents from the policy, not just the tools. Include advisory/no-tool intents the policy describes and procedure-driven intents from any troubleshooting manual. Unify all policy documents into ONE flat intent taxonomy.
3. Two archetypes. Transactional (mutate a record): load the target -> gate eligibility -> confirm -> execute. Troubleshooting (diagnose a symptom against a manual): an ordered checklist -- check a documented cause -> apply its fix -> re-test -> continue; resolve on a passing re-test, escalate after all causes exhausted. An agent WRITE-tool fix routes into that tool's mutating subflow; a user-device fix is an agent_action instructing the user.
4. Shared subflows. Plan the identification subflow and a present_summary_and_confirm subflow.
5. Per-subflow skeletons: ordered steps, id/type/tool/gist citing the policy line(s). Order: load target (tool_call) -> eligibility gate(s) -> collect details/disclosures -> summary_and_confirm -> authorize -> verify -> exit.
6. Coverage audit. Walk the policy section by section; assign every normative statement to a step, or list it as excluded (with a reason).
7. Lean. Fewest subflows and nodes that enforce the policy.
\end{lstlisting}
\end{gencard}
\end{minipage}\hfill
\begin{minipage}[t]{0.492\textwidth}
\vspace{0pt}%
\begin{gencard}{Stage 6 \,\textperiodcentered\, adversarial review checklist}
\begin{lstlisting}[style=promptbody]
1. Policy completeness. Walk the policy line by line. Every normative statement must be enforced by some node's criteria (or be genuinely out of graph scope). Fix any missing rule, wrong number, or weakened condition.
2. Cause coverage (enumerate explicitly). Wherever the policy enumerates the possible causes/conditions of a problem, list every one and name the node whose criterion handles it. If any cause has no node, add it.
3. Essentials compliance. Gates outcome-framed; environment-gated enums quoted with exact equality and whose-field named; facts grounded in tool results; every WRITE tool has its authorize -> verify pair with everything the policy requires upstream. Tool nodes name only agent-inventory tools.
4. Graph sanity. Entry reaches every node; every node reaches an exit; decision branch labels/targets consistent; classifier covers all intents + general + transfer.
5. Over-restriction sweep. A node whose criterion cannot be satisfied from the available tools/data is a transfer machine -- relax it to the policy's actual condition.

If nothing needs changing, output exactly:
NO_CHANGES
\end{lstlisting}
\end{gencard}
\end{minipage}
\caption{Workflow-generation stage prompts: the plan stage's requirements (left)
and the adversarial reviewer's checklist (right).}
\label{fig:prompt-planreview}
\end{figure*}

\begin{figure*}[t]
\centering
\small Figures~\ref{fig:workflow-example}--\ref{fig:workflow-telecom} use the
following shared node notation.\par\medskip
\resizebox{\textwidth}{!}{%
\begin{tikzpicture}[wfex]
\node[wfact, text width=15mm, minimum height=5mm] at (0.9,0) {{\tiny agent\_action}};
\node[wfusr, text width=15mm, minimum height=5mm] at (3.5,0) {{\tiny user\_input}};
\node[wftool, text width=15mm, minimum height=5mm] at (6.1,0) {{\tiny tool\_call (read)}};
\node[wfauth, text width=20mm, minimum height=5mm] at (8.8,0) {{\tiny tool\_authorization}\\[-2pt]{\tiny (authorize mutation)}};
\node[wfdec, text width=9mm] at (11.5,0) {{\tiny decision}};
\node[wfterm, text width=12mm, minimum height=5mm] at (13.7,0) {{\tiny entry/exit}};
\node[wfsub, text width=15mm, minimum height=5mm] at (15.9,0) {{\tiny subflow anchor}};
\end{tikzpicture}%
}
\end{figure*}

\begin{figure*}[t]
\centering
\resizebox{\textwidth}{!}{%
\begin{tikzpicture}[wfex]
\node[wfterm] (start) at (0,0) {\texttt{start}\\[-1pt]{\tiny entry}};
\node[wfact]  (intake) at (2.6,0) {\texttt{intake}\\[-1pt]{\tiny agent\_action}};
\node[wfusr]  (askid) at (5.4,0) {\texttt{ask\_user\_id}\\[-1pt]{\tiny user\_input}};
\node[wftool] (loadprof) at (8.2,0) {\texttt{load\_profile}\\[-1pt]{\tiny tool\_call:}\\[-2pt]{\tiny get\_user\_details}};
\node[wfdec]  (classify) at (11.2,0) {\texttt{classify}\\[-1pt]{\tiny decision}};
\node[wfterm] (exitgen) at (14.4,0.9) {\texttt{exit\_general}\\[-1pt]{\tiny exit}};
\node[wfterm] (exittrf) at (14.4,-0.9) {\texttt{exit\_transfer}\\[-1pt]{\tiny exit}};
\node[wfghost] (others) at (14.6,-2.6) {other request-type subflows\\(modify, cancel,\\refund, \ldots)};
\draw[wfe] (start) -- (intake);
\draw[wfe] (intake) -- (askid);
\draw[wfe] (askid) -- (loadprof);
\draw[wfe] (loadprof) -- (classify);
\draw[wfbr] (classify) -- node[wflab, above, sloped]{general} (exitgen);
\draw[wfbr] (classify) -- node[wflab, below, sloped]{transfer} (exittrf);
\draw[wfbr] (classify) -- node[wflab, below, sloped]{\ldots} (others);
\begin{scope}[on background layer]
\node[draw=black!45, dashed, rounded corners=2pt, fit={(askid) (loadprof)}, inner sep=2.5mm,
      label={[wflab, anchor=south west]north west:\texttt{identify\_user} subflow (shared)}] {};
\end{scope}
\node[wfact]  (trip) at (0.9,-2.6) {\texttt{collect\_trip}\\[-1pt]{\tiny agent\_action}};
\node[wftool] (search) at (3.5,-2.6) {\texttt{search\_flights}\\[-1pt]{\tiny tool\_call:}\\[-2pt]{\tiny search\_*\_flight}};
\node[wfact]  (pax) at (6.1,-2.6) {\texttt{collect\_}\\[-2pt]\texttt{passengers}\\[-1pt]{\tiny agent\_action}};
\node[wfact]  (pay) at (8.7,-2.6) {\texttt{collect\_payment}\\[-1pt]{\tiny agent\_action}};
\node[wfact]  (ins) at (11.3,-2.6) {\texttt{offer\_insurance}\\[-1pt]{\tiny agent\_action}};
\node[wfact]  (bags) at (11.3,-4.6) {\texttt{collect\_baggage}\\[-1pt]{\tiny agent\_action}};
\node[wfact]  (summ) at (8.7,-4.6) {\texttt{present\_summary}\\[-1pt]{\tiny agent\_action}};
\node[wfusr]  (conf) at (6.1,-4.6) {\texttt{explicit\_}\\[-2pt]\texttt{confirmation}\\[-1pt]{\tiny user\_input}};
\node[wfauth] (authb) at (3.5,-4.6) {\texttt{authorize\_book}\\[-1pt]{\tiny tool\_authorization:}\\[-2pt]{\tiny book\_reservation}};
\node[wfact]  (verif) at (0.9,-4.6) {\texttt{verify\_book\_}\\[-2pt]\texttt{reservation}\\[-1pt]{\tiny agent\_action}};
\node[wfterm] (booked) at (0.9,-6.3) {\texttt{booked}\\[-1pt]{\tiny exit}};
\draw[wfbr] (classify.south) to[out=-90,in=90] node[wflab, above, sloped]{book flight} (trip.north);
\draw[wfe] (trip) -- (search);
\draw[wfe] (search) -- (pax);
\draw[wfe] (pax) -- (pay);
\draw[wfe] (pay) -- (ins);
\draw[wfe] (ins.south) to[out=-90,in=90] (bags.north);
\draw[wfe] (bags) -- (summ);
\draw[wfe] (summ) -- (conf);
\draw[wfe] (conf) -- (authb);
\draw[wfe] (authb) -- (verif);
\draw[wfe] (verif) -- (booked);
\begin{scope}[on background layer]
\node[draw=black!45, dashed, rounded corners=2pt, fit={(summ) (conf)}, inner sep=2.5mm,
      label={[wflab, anchor=north west]south west:\texttt{present\_summary\_and\_confirm} subflow (shared)}] {};
\node[draw=black!30, rounded corners=3pt, fit={(trip) (ins) (bags) (verif) (booked)}, inner sep=4.5mm,
      label={[wflab, anchor=south east]north east:\texttt{book\_reservation} subflow (composed as \texttt{book\_flow.*})}] {};
\end{scope}
\end{tikzpicture}%
}
\caption{\textbf{Airline}: the composed workflow graph---the shared spine
(start, intake, the \texttt{identify\_user} subflow, classify)
and the full transactional \texttt{book\_reservation} request path. Solid edges
fire only \texttt{when:\,satisfied}; dashed edges are classifier branches. The
mutating tool is represented downstream of its \texttt{tool\_authorization}
node (trapezoid), immediately followed by the verify node that demands a
successful \texttt{TOOL\_RESULT}.}
\label{fig:workflow-example}
\end{figure*}

\begin{figure*}[t]
\centering
\resizebox{\textwidth}{!}{%
\begin{tikzpicture}[wfex]
\node[wfterm] (rstart) at (0,1.6) {\texttt{start}\\[-1pt]{\tiny entry}};
\node[wfact]  (rintake) at (2.4,1.6) {\texttt{intake}\\[-1pt]{\tiny agent\_action}};
\node[wfsub]  (rid) at (5.0,1.6) {\texttt{identify\_user}\\[-1pt]{\tiny shared subflow}};
\node[wfdec]  (rclass) at (7.7,1.6) {\texttt{classify}\\[-1pt]{\tiny decision}};
\node[wfact] (rgen) at (10.6,2.5) {\texttt{refuse\_general}\\[-1pt]{\tiny agent\_action}};
\node[wftool] (rtrf) at (10.6,0.7) {\texttt{transfer\_case}\\[-1pt]{\tiny tool\_call}};
\node[wfghost] (rothers) at (13.7,1.6) {other request subflows\\(modify, return,\\exchange, account)};
\draw[wfe] (rstart) -- (rintake);
\draw[wfe] (rintake) -- (rid);
\draw[wfe] (rid) -- (rclass);
\draw[wfbr] (rclass) -- node[wflab, above, sloped]{general} (rgen);
\draw[wfbr] (rclass) -- node[wflab, below, sloped]{transfer} (rtrf);
\draw[wfbr] (rclass) -- node[wflab, above]{other intents} (rothers);
\node[wfsub]  (locate) at (0,-1.4) {\texttt{locate}\\[-1pt]{\tiny subflow:}\\[-2pt]{\tiny locate\_order}};
\node[wfact]  (status) at (2.7,-1.4) {\texttt{verify\_status\_}\\[-2pt]\texttt{pending}\\[-1pt]{\tiny agent\_action (gate)}};
\node[wfact]  (reason) at (5.4,-1.4) {\texttt{obtain\_reason}\\[-1pt]{\tiny agent\_action}};
\node[wfsub]  (sac) at (8.1,-1.4) {\texttt{summary\_and\_}\\[-2pt]\texttt{confirm}\\[-1pt]{\tiny subflow (shared)}};
\node[wfauth] (auth) at (10.9,-1.4) {\texttt{authorize\_cancel}\\[-1pt]{\tiny tool\_authorization:}\\[-2pt]{\tiny cancel\_pending\_order}};
\node[wfact]  (verify) at (13.7,-1.4) {\texttt{verify\_cancel\_}\\[-2pt]\texttt{pending\_order}\\[-1pt]{\tiny agent\_action}};
\node[wfterm] (done) at (16.2,-1.4) {\texttt{cancelled}\\[-1pt]{\tiny exit}};
\draw[wfbr] (rclass.south) to[out=-90,in=90]
  node[wflab, above, sloped]{cancel pending order} (locate.north);
\draw[wfe] (locate) -- (status);
\draw[wfe] (status) -- (reason);
\draw[wfe] (reason) -- (sac);
\draw[wfe] (sac) -- (auth);
\draw[wfe] (auth) -- (verify);
\draw[wfe] (verify) -- (done);
\begin{scope}[on background layer]
\node[draw=black!30, rounded corners=3pt, fit={(locate) (done)}, inner sep=4mm,
      label={[wflab, anchor=south west]north west:\texttt{cancel\_pending\_order} subflow}] {};
\end{scope}
\end{tikzpicture}%
}
\caption{\textbf{Retail}: the shared entry--intake--identification spine and
classifier dispatch into request-specific subflows, with
\texttt{cancel\_pending\_order} expanded. Its status and reason gates, shared
summary-and-confirm step, authorization, and post-call verification form the
complete cancellation path. The gray box marks the expanded cancellation
subflow; the dotted box summarizes the other classifier branches.}
\label{fig:workflow-retail}
\end{figure*}

\begin{figure*}[t]
\centering
\resizebox{\textwidth}{!}{%
\begin{tikzpicture}[wfex]
\node[wfterm] (tstart) at (0,1.6) {\texttt{start}\\[-1pt]{\tiny entry}};
\node[wfact]  (tintake) at (2.4,1.6) {\texttt{intake}\\[-1pt]{\tiny agent\_action}};
\node[wfsub]  (tid) at (5.0,1.6) {\texttt{identify\_customer}\\[-1pt]{\tiny shared subflow}};
\node[wfdec]  (tclass) at (7.7,1.6) {\texttt{classify}\\[-1pt]{\tiny decision}};
\node[wfterm] (tgen) at (10.6,2.5) {\texttt{exit\_general}\\[-1pt]{\tiny exit}};
\node[wfsub] (ttrf) at (10.6,0.7) {\texttt{transfer\_to\_human}\\[-1pt]{\tiny subflow}};
\node[wfghost] (tothers) at (13.8,1.6) {other request subflows\\(line, plan, roaming,\\data, cellular)};
\draw[wfe] (tstart) -- (tintake);
\draw[wfe] (tintake) -- (tid);
\draw[wfe] (tid) -- (tclass);
\draw[wfbr] (tclass) -- node[wflab, above, sloped]{general} (tgen);
\draw[wfbr] (tclass) -- node[wflab, below, sloped]{transfer} (ttrf);
\draw[wfbr] (tclass) -- node[wflab, above]{other intents} (tothers);
\node[wfact]  (obill) at (0,-1.5) {\texttt{obtain\_bill\_id}\\[-1pt]{\tiny agent\_action}};
\node[wftool] (lbill) at (2.6,-1.5) {\texttt{load\_bill}\\[-1pt]{\tiny tool\_call:\,get\_bill}};
\node[wfact]  (sover) at (5.2,-1.5) {\texttt{verify\_status\_}\\[-2pt]\texttt{overdue}\\[-1pt]{\tiny agent\_action (gate)}};
\node[wfact]  (noawait) at (7.9,-1.5) {\texttt{verify\_no\_}\\[-2pt]\texttt{awaiting\_payment}\\[-1pt]{\tiny agent\_action (gate)}};
\node[wfsub]  (sac2) at (10.6,-1.5) {\texttt{summary\_and\_}\\[-2pt]\texttt{confirm}\\[-1pt]{\tiny subflow (shared)}};
\node[wfauth] (auth2) at (13.3,-1.5) {\texttt{authorize\_send\_}\\[-2pt]\texttt{payment\_request}\\[-1pt]{\tiny tool\_authorization}};
\node[wfact]  (ver2) at (16.0,-1.5) {\texttt{verify\_send\_}\\[-2pt]\texttt{payment\_request}\\[-1pt]{\tiny agent\_action}};
\node[wfterm] (sent) at (18.3,-1.5) {\texttt{sent}\\[-1pt]{\tiny exit}};
\draw[wfbr] (tclass.south) to[out=-90,in=90]
  node[wflab, above, sloped]{pay overdue bill} (obill.north);
\draw[wfe] (obill) -- (lbill);
\draw[wfe] (lbill) -- (sover);
\draw[wfe] (sover) -- (noawait);
\draw[wfe] (noawait) -- (sac2);
\draw[wfe] (sac2) -- (auth2);
\draw[wfe] (auth2) -- (ver2);
\draw[wfe] (ver2) -- (sent);
\begin{scope}[on background layer]
\node[draw=black!30, rounded corners=3pt, fit={(obill) (sent)}, inner sep=4mm,
      label={[wflab, anchor=south west]north west:\texttt{pay\_overdue\_bill} subflow (authorizes \texttt{send\_payment\_request})}] {};
\end{scope}
\node[wfact]  (cmms) at (0,-4.6) {\texttt{confirm\_mms\_}\\[-2pt]\texttt{issue}\\[-1pt]{\tiny agent\_action}};
\node[wfact]  (vsvc) at (2.6,-4.6) {\texttt{verify\_service}\\[-1pt]{\tiny agent\_action (prereq)}};
\node[wfact]  (vdata) at (5.2,-4.6) {\texttt{verify\_mobile\_}\\[-2pt]\texttt{data}\\[-1pt]{\tiny agent\_action (prereq)}};
\node[wfact]  (ntech) at (7.9,-4.6) {\texttt{check\_network\_}\\[-2pt]\texttt{tech}\\[-1pt]{\tiny agent\_action}};
\node[wfact]  (wific) at (10.6,-4.6) {\texttt{check\_wifi\_}\\[-2pt]\texttt{calling}\\[-1pt]{\tiny agent\_action}};
\node[wfact]  (perm) at (13.3,-4.6) {\texttt{check\_app\_}\\[-2pt]\texttt{permissions}\\[-1pt]{\tiny agent\_action}};
\node[wfact]  (apn) at (16.0,-4.6) {\texttt{check\_apn\_mmsc}\\[-1pt]{\tiny agent\_action}};
\node[wfdec]  (dout) at (16.0,-6.6) {\texttt{decide\_}\\[-2pt]\texttt{outcome}\\[-1pt]{\tiny decision}};
\node[wfterm] (resolved) at (13.3,-6.6) {\texttt{resolved}\\[-1pt]{\tiny exit}};
\node[wfact]  (tnode) at (16.0,-8.4) {\texttt{transfer\_node}\\[-1pt]{\tiny agent\_action}};
\node[wfterm] (trfd) at (13.3,-8.4) {\texttt{transferred}\\[-1pt]{\tiny exit}};
\draw[wfbr] (tclass.south) -- ++(0,-0.6)
  -| ([xshift=-7mm]cmms.west) -- (cmms.west);
\node[wflab, rotate=90] at (-0.65,-3.3) {MMS issue};
\draw[wfe] (cmms) -- (vsvc);
\draw[wfe] (vsvc) -- (vdata);
\draw[wfe] (vdata) -- (ntech);
\draw[wfe] (ntech) -- (wific);
\draw[wfe] (wific) -- (perm);
\draw[wfe] (perm) -- (apn);
\draw[wfe] (apn) -- (dout);
\draw[wfbr] (dout) -- node[wflab, above]{resolved} (resolved);
\draw[wfbr] (dout) -- node[wflab, right]{transfer} (tnode);
\draw[wfe] (tnode) -- (trfd);
\begin{scope}[on background layer]
\node[draw=black!30, rounded corners=3pt, fit={(cmms) (apn) (trfd)}, inner sep=4mm,
      label={[wflab, anchor=north west]south west:\texttt{troubleshoot\_mms} subflow (prerequisites Service\,$\subset$\,Data\,$\subset$\,MMS)}] {};
\end{scope}
\end{tikzpicture}%
}
\caption{\textbf{Telecom}: the shared entry--intake--identification spine and
classifier dispatch, with two request branches expanded. The overdue-bill branch
invokes \texttt{pay\_overdue\_bill}, which gates
\texttt{send\_payment\_request}; the MMS branch invokes
\texttt{troubleshoot\_mms}, which checks service and data prerequisites before
the documented MMS causes and closes by resolving or transferring. The dotted
box summarizes the remaining classifier branches.}
\label{fig:workflow-telecom}
\end{figure*}

\section{The Use of LLMs}
\label{app:llm-use}

We used LLMs solely for light editing, such as correcting grammatical errors
and polishing wording. They did not contribute to research ideation,
experiments, analysis, or substantive writing.

\end{document}